\documentclass{article}

\newif\ifarxiv
\arxivfalse

\ifarxiv
  \usepackage[preprint]{neurips_2026}
\else
  \usepackage[main, final]{neurips_2026}
\fi

\usepackage[utf8]{inputenc} 
\usepackage[T1]{fontenc}    
\usepackage{url}            
\usepackage{booktabs}       
\usepackage{amsfonts}       
\usepackage{nicefrac}       
\usepackage{microtype}      
\usepackage[table]{xcolor}  
\usepackage{amsmath,amssymb,amsfonts}
\usepackage{amsthm}
\usepackage{url}
\usepackage{graphicx}
\usepackage{wrapfig}
\usepackage{graphicx}
\usepackage{subcaption}
\usepackage{makecell}
\usepackage{caption}
\usepackage{placeins}
\usepackage{algorithm}
\usepackage{algpseudocode}
\usepackage{multirow}
\usepackage{wrapfig}
\usepackage{tabularx}
\usepackage{xspace}
\usepackage{url}
\usepackage{hyperref}       
\theoremstyle{plain}
\usepackage{array}

\newtheorem{proposition}{Proposition}[section]

\newtheorem{assumption}{Assumption}[section]
\usepackage{thmtools}
\usepackage{thm-restate}
\usepackage{etoc} 
\hypersetup{colorlinks=true, citecolor=[HTML]{3366CC}, linkcolor=[HTML]{3366CC}, urlcolor=[HTML]{3366CC}}

\definecolor{OursTint}{HTML}{F3F8FB}
\definecolor{SectionGray}{gray}{0.965}
\definecolor{LenGray}{gray}{0.45}
\definecolor{PosGreen}{HTML}{2E7D32}
\definecolor{NegRed}{HTML}{C0392B}
\definecolor{LenBlue}{HTML}{2C6F9E}
\definecolor{TieGray}{gray}{0.45}

\newcommand{\method}{\textsc{MPOPD}\xspace}

\newcommand{\acclen}[2]{#1{\scriptsize\textcolor{LenGray}{\,/\,#2}}}
\newcommand{\accplus}[1]{\textcolor{PosGreen}{\textbf{+#1}}}
\newcommand{\accminus}[1]{\textcolor{NegRed}{\textbf{\ensuremath{-}#1}}}

\newcommand{\lensave}[1]{{\scriptsize\textcolor{LenBlue}{\ensuremath{\downarrow}#1}}}
\newcommand{\lenmore}[1]{{\scriptsize\textcolor{NegRed}{\ensuremath{\uparrow}#1}}}
\newcommand{\impinline}[2]{#1{\scriptsize\textcolor{TieGray}{\,/\,}}#2}
\newcommand{\graybg}[1]{%
  \begingroup
  \setlength{\fboxsep}{1pt}%
  \colorbox{LenGray!15}{#1}%
  \endgroup
}
\newcolumntype{B}{>{\raggedright\arraybackslash}p{0.115\textwidth}}
\newcolumntype{M}{>{\centering\arraybackslash}p{0.105\textwidth}}
\newcolumntype{O}{>{\columncolor{OursTint}\centering\arraybackslash}p{0.105\textwidth}}
\newcolumntype{I}{>{\raggedleft\arraybackslash}p{0.125\textwidth}}

\newcolumntype{Y}{>{\raggedright\arraybackslash}X}

\definecolor{CaseGreen}{HTML}{DFF3E7}
\definecolor{CaseGreenText}{HTML}{14532D}
\definecolor{CaseRed}{HTML}{FEE2E2}
\definecolor{CaseRedText}{HTML}{7F1D1D}
\definecolor{CaseBlue}{HTML}{E8F0FE}
\definecolor{CaseBlueText}{HTML}{1D4ED8}
\definecolor{CaseAmber}{HTML}{FEF3C7}
\definecolor{CaseAmberText}{HTML}{92400E}
\definecolor{CaseGray}{HTML}{F6F7F9}

\newcommand{\casebadge}[3]{%
  \begingroup\setlength{\fboxsep}{1.5pt}%
  \colorbox{#1}{\strut\scriptsize\textcolor{#2}{\textbf{#3}}}%
  \endgroup%
}
\newcommand{\okbadge}{\casebadge{CaseGreen}{CaseGreenText}{Correct}}
\newcommand{\badbadge}[1]{\casebadge{CaseRed}{CaseRedText}{#1}}
\newcommand{\correctbadge}{\okbadge}
\newcommand{\wrongbadge}{\badbadge{Wrong}}
\newcommand{\lenbadge}[1]{\casebadge{CaseBlue}{CaseBlueText}{#1}}
\newcommand{\limitbadge}{%
  \begingroup\setlength{\fboxsep}{1.5pt}%
  \colorbox{CaseBlue}{\scriptsize\textcolor{CaseBlueText}{\textbf{\begin{tabular}{@{}c@{}}16.4k\\[-0.15em](max limit)\end{tabular}}}}%
  \endgroup%
}
\newcommand{\spanhl}[3]{%
  \begingroup\setlength{\fboxsep}{0.8pt}%
  \colorbox{#1}{\strut\textcolor{#2}{#3}}%
  \endgroup%
}
\newcommand{\goodhl}[1]{\spanhl{CaseGreen}{CaseGreenText}{#1}}
\newcommand{\patternhl}[1]{\spanhl{CaseAmber}{CaseAmberText}{#1}}
\newcommand{\wronghl}[1]{\spanhl{CaseRed}{CaseRedText}{#1}}
\newcommand{\rawline}[1]{{\ttfamily\scriptsize #1}}
\newcommand{\rawopenboxed}{\textbackslash boxed\textbraceleft{}}
\newcommand{\snippetcell}[1]{%
  \begin{minipage}[t]{\linewidth}\raggedright #1\end{minipage}%
}

\newcolumntype{Y}{>{\raggedright\arraybackslash}X}

\definecolor{CaseGreen}{HTML}{DFF3E7}
\definecolor{CaseGreenText}{HTML}{14532D}
\definecolor{CaseRed}{HTML}{FEE2E2}
\definecolor{CaseRedText}{HTML}{7F1D1D}
\definecolor{CaseBlue}{HTML}{E8F0FE}
\definecolor{CaseBlueText}{HTML}{1D4ED8}
\definecolor{CaseAmber}{HTML}{FEF3C7}
\definecolor{CaseAmberText}{HTML}{92400E}
\definecolor{CaseGray}{HTML}{F6F7F9}

\title{Proposing Better Rollouts for On-Policy Distillation}

\author{%
  Sirui Chen$^{1}$\thanks{Equal contribution.} \quad
  Yunzhe Qi$^{1}$\footnotemark[1] \quad
  Haobo Xu$^{1}$ \quad
  Jiaru Zou$^{2}$ \\
  \bfseries
  Mengting Ai$^{1}$ \quad
  Zihao Li$^{1}$ \quad
  Jingrui He$^{1}$ \\
  $^{1}$University of Illinois Urbana-Champaign \quad
  $^{2}$Stanford University \\
  \texttt{\{sirui6, yunzheq2\}@illinois.edu}
}

\begin{document}

\maketitle

\begin{abstract}
On-policy distillation (OPD) improves language-model post-training by letting the student learn from its own sampled trajectories under teacher supervision, aligning the training process with inference-time behavior. However, the rollout policy in OPD is usually treated as fixed: each round samples directly from the current student. We show that this choice can create a compounding teacher--student mismatch, because it requires the teacher to provide effective supervision on whichever trajectories the student samples, including ones less reachable by the teacher itself. On such trajectories, teacher supervision can become noisy or even misleading, leading to undesirable student patterns such as repetition. Moreover, since the updated student in turn defines the rollout policy of subsequent rounds, these patterns persist and potentially amplify over training. 
This failure mode suggests that improving OPD requires treating the rollout policy itself as an optimization target, rather than only modifying the distillation objective.
We propose \emph{Mixture Proposal On-Policy Distillation} (MPOPD), which derives a student-anchored, teacher-guided rollout distribution from a local trust-region objective based on relative teacher support. 
The resulting target is a geometric mixture of the student and teacher policies, and we introduce a proposal policy that approximates this mixture, enabling rollout generation in a single forward pass. 
Experiments on 16 math, knowledge, and medical reasoning benchmarks show consistent gains over OPD and recent variants. On math reasoning, our method improves average accuracy by 6.91 over OPD on Qwen3-4B-Base while reducing average generation length by 41\%, suggesting that rollout policy design provides an effective way to improve OPD.
Code is available at \url{https://github.com/csr16/MPOPD}.
\end{abstract}
\section{Introduction}
 
On-policy distillation~\cite{gu2024minillm,agarwal2024onpolicy,lu2025onpolicydistillation}
has emerged as an important post-training paradigm for large language models.
Unlike standard knowledge distillation~\cite{guha2025openthoughts,guo2025deepseek,zhao20251,hinton2015distilling}
or supervised fine-tuning~\cite{taori2023alpaca,bercovich2025llama},
which train the student on teacher- or expert-generated data, OPD lets the
student sample its own trajectories and then distills token-level teacher
supervision on those trajectories. This makes the training distribution better
match the student's inference-time behavior, and has led to strong results in
summarization, translation, and mathematical reasoning
tasks~\citep{xu2026tip,lu2025onpolicydistillation,wu2026lightning}.
Recent large-scale models including Qwen3.5, MiMo-V2, and DeepSeek-V4 have
therefore adopted on-policy distillation as a core component of their
post-training pipelines~\citep{team2026qwen3,coreteam2025mimounlockingreasoningpotential,deepseekai2026deepseekv4}.

The fine-grained, on-policy supervision is what makes OPD powerful, but it also exposes OPD to a potential teacher--student support mismatch. 
Since trajectories are sampled from the student, the teacher is expected to provide token-level supervision not only on responses it would naturally generate, but also on student trajectories that often have low probability under the teacher. On such low-reachability trajectories, the teacher's conditional token distribution can become noisy or even misleading.
This mismatch can be self-reinforcing across distillation rounds. If a
degenerate pattern receives insufficient penalty in one round, it can remain in
the student's high-probability region, be sampled again in later rounds, and
receive similarly weak or noisy teacher supervision \citep{xuspeculative}. Recent studies
\citep{li2026rethinking} have observed related failure modes in
OPD-trained students, including repetition~\citep{fu2026revisiting,gu2024minillm}
and entropy collapse~\citep{jin2026entropy,ko2026scaling}. These findings
suggest that the effectiveness of OPD depends not only on using on-policy
samples, but also on whether the teacher can provide reliable supervision on
the particular regions of the student distribution visited during training.

\begin{wrapfigure}{r}{0.41\linewidth}
    \vspace{-1.5em}
    \centering
    \includegraphics[width=\linewidth]{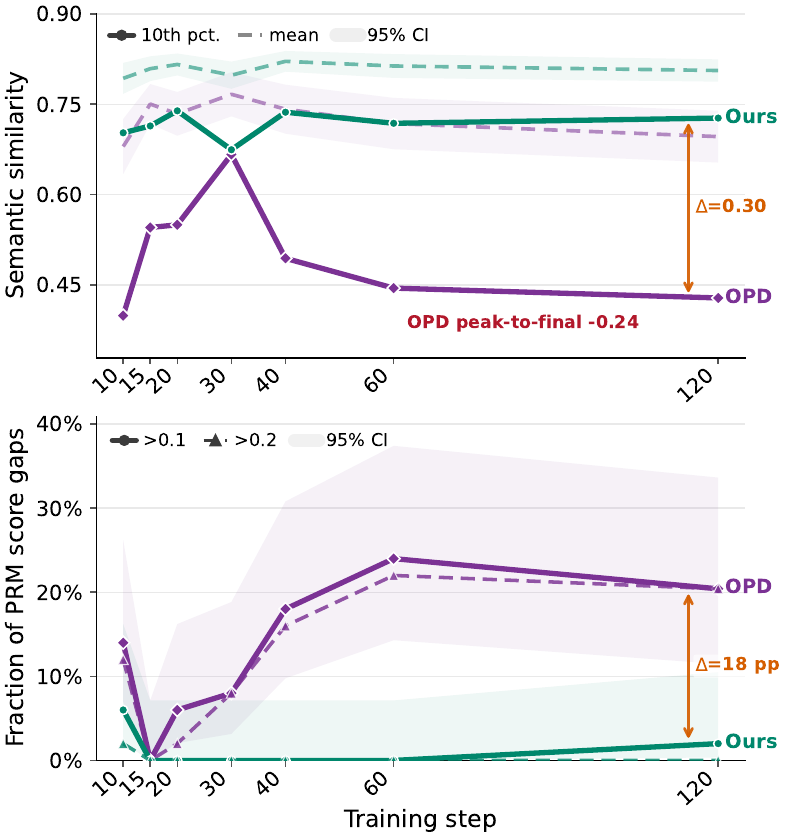}
    \captionsetup{font=footnotesize}
    \vspace{-0.2in}
    \caption{\textbf{Teacher--student mismatch compounds under OPD}.
    \textit{Upper}: 10th-percentile similarity between the two completions drops by $0.24$ from its peak, while our method remains stable. \textit{Lower}: fraction of prompts on which the former scores $>0.10$ (solid) or $>0.20$ (dashed) below the latter in PRM score; the former reaches ${\sim}24\%$ by the end of training, while our method keeps it near zero.}
    \label{fig:degradation}
    \vspace{-1.0em}
\end{wrapfigure}

We verify this \emph{compounding effect} empirically. At selected training steps, we use the teacher to
complete a student trajectory and a teacher trajectory from
the same prompt.\footnote{We compare the two completions by (i) semantic
similarity and (ii) the fraction of prompts where the student-trajectory
completion gets a lower process reward model (PRM) score
than the teacher-trajectory one, at gap thresholds
$0.1$ and $0.2$.} Fig.~\ref{fig:degradation} shows two trends
under OPD. First, the similarity between the two teacher completions drifts
downward over training, with the 10th percentile dropping more sharply, indicating that a small but growing tail of student trajectories is rarely
reachable under the teacher. 
Second, this drift also degrades completion quality. The fraction of prompts
where the teacher's completion from a student trajectory receives a lower
PRM reward than its completion from its own trajectory grows steadily under
both gap thresholds.
Both trends are
consistent with the mechanism above: once the student places probability on
trajectories the teacher would rarely reach itself, subsequent rounds
reinforce them, and the teacher--student mismatch persists across iterations.

To break this loop, we propose to optimize the rollout policy directly, shifting probability toward trajectories the teacher is more likely to reach while remaining close enough to the current student to maintain the on-policy nature. Concretely, we introduce a reweighting function based on relative teacher support, which upweights trajectories the teacher supports more strongly than the current student. To stay close to on-policy behavior, we constrain this reweighting to a local trust region around the current student. Solving this problem yields a geometric mixture of the student and teacher policies, with a single scalar interpolating between on-policy and teacher-guided sampling.
However, computing this mixture online requires running both the teacher and student at every decoding step, which is costly at scale. We therefore train a proposal policy to approximate it, so that rollouts need only a single forward pass. This gives our \emph{Mixture Proposal On-Policy Distillation} (\method), where the rollout policy is first derived as a geometric mixture of the student and teacher and then approximated by a proposal policy for efficient generation.

Prior efforts to improve OPD have mainly modified the distillation
objective~\citep{jang2026stable,jin2026entropy,yang2026learning,ko2026scaling},
or adjusted the rollout policy via heuristic sampling~\citep{xuspeculative,gu2024minillm}.  In contrast, we treat the rollout policy itself as an explicit optimization target, derived from a local trust-region principle and approximated by a learnable proposal policy for scalable generation. Fig.~\ref{fig:degradation} shows that our method avoids the compounding mismatch, with 10th-percentile similarity 0.30 higher and PRM score gap fraction 18 points lower than OPD. We further evaluate \method on 16 math, knowledge, and medical reasoning
benchmarks. Across these tasks, \method consistently outperforms OPD and recent variants. On math,
it improves average accuracy over OPD by +1.46 on
Qwen3-1.7B-Base and +6.91 on Qwen3-4B-Base, while reducing
generation length by 41\% at the 4B scale.

\begin{figure*}[t]
    \centering
    \vspace{-0.3in}\includegraphics[width=0.88\linewidth]{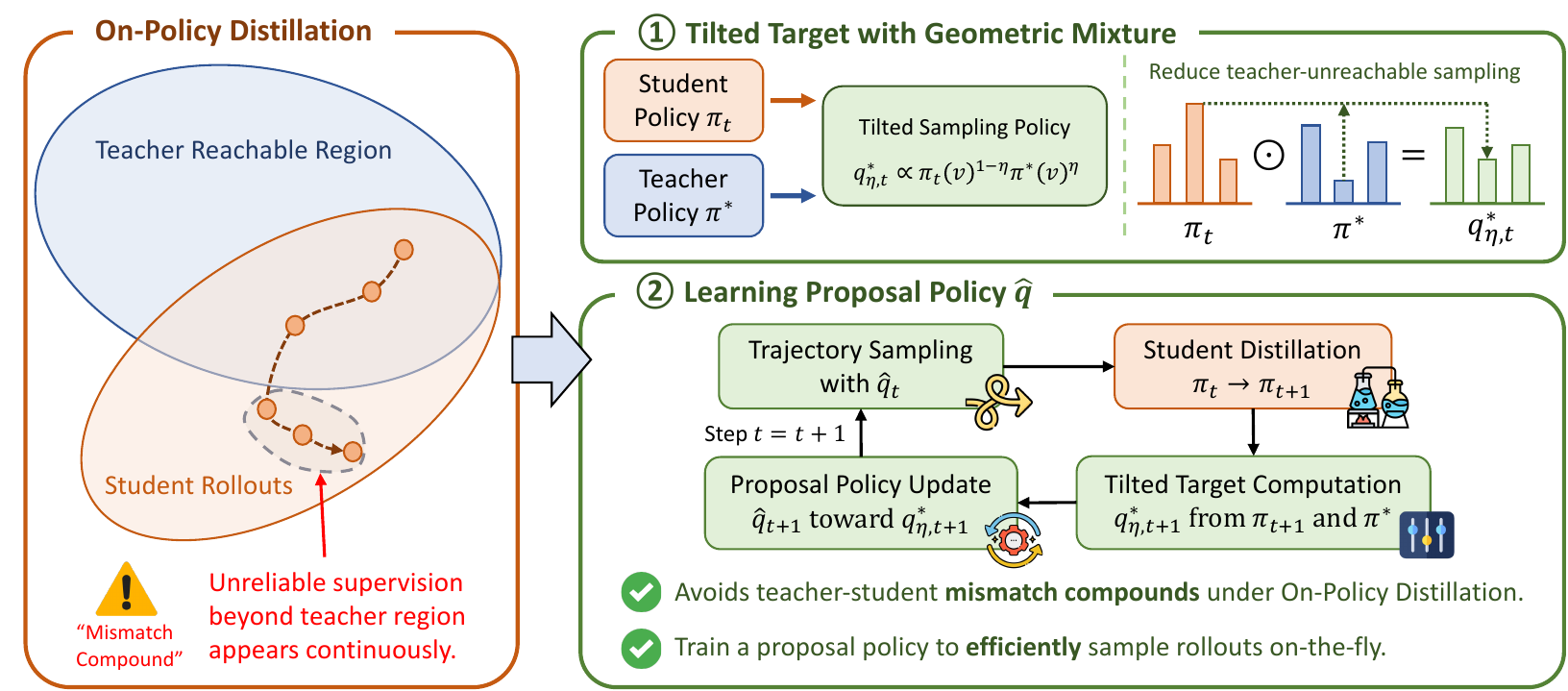}
    \caption{\textbf{Overview of \method.}
    \textbf{Left:} Standard on-policy distillation suffers from compounding teacher--student mismatch, where student rollouts continuously drift beyond the teacher-reachable region and receive unreliable supervision.
    \textbf{Right:} \method derives a tilted target as a geometric mixture of the student and teacher, and trains a proposal policy to approximate it.}
\label{fig:main}
\end{figure*}
\section{Setup: The Rollout Policy in On-Policy Distillation}
\label{sec:setup}

Let \(\mathcal{D}\) denote the prompt distribution over input problems \(x\),
and let \(\mathcal{V}\) denote the vocabulary. A response trajectory is a token sequence $\tau = (y_1, \ldots, y_j, \ldots, y_{|\tau|})$ with $y_j \in \mathcal{V}$, and a policy $\pi$ defines the autoregressive distribution $\pi(\tau \mid x) = \prod_{j=1}^{|\tau|} \pi(y_j \mid x, y_{<j})$. We denote the teacher policy by $\pi^\star$ and the student policy at the $t$-th distillation round by $\pi_t$. 
At each round $t$, OPD samples response trajectories from a rollout policy $q_t$, queries the teacher on the sampled responses, and updates the student against teacher supervision through a token-level objective:
\begin{equation}
\mathcal{L}(\pi; q_t) = \mathbb{E}_{x \sim \mathcal{D},\, \tau \sim q_t(\cdot \mid x)}\!\left[\sum_{j=1}^{|\tau|} \mathrm{KL}\!\left(\pi(\cdot \mid x, y_{<j}) \,\big\|\, \pi^\star(\cdot \mid x, y_{<j})\right)\right].
\label{eq:opd-objective}
\end{equation}
In standard OPD, the rollout policy is taken to be the current student, $q_t = \pi_t$, so the student learns from its own behavior.
While this choice aligns training trajectories with the student's inference-time behavior, it also exposes the teacher to low-reachability student trajectories, producing the compounding mismatch over training rounds (Fig.~\ref{fig:degradation}). We therefore treat the rollout policy as a \textbf{design choice} and seek a $q_t$ that (i) upweights tokens that are more reachable under the teacher, and (ii) deviates from $\pi_t$ only locally, so the modification acts as a correction rather than a replacement.

A natural way to implement (i) and (ii) structurally is to write the rollout policy as a multiplicative reweighting of the student policy. Specifically, for a prefix $(x, y_{<j})$ and a candidate next token $v\in\mathcal{V}$,
\begin{equation}
q_t(v \mid x, y_{<j}) \;\propto\; \pi_t(v \mid x, y_{<j})\, h(v, x, y_{<j}), 
\label{eq:reweighted-proposal}
\end{equation}
where $h$ is a nonnegative reweighting function. The factor $\pi_t$ anchors $q_t$ to the current student, while $h$ redistributes probability mass across candidate tokens at each generation step. A good $h$ should perform this redistribution so that the rollout shifts toward trajectories better aligned with the teacher's own behavior. Building on this requirement, we design a local score that captures which tokens should be upweighted, and show that the resulting $h$ takes an exponential-tilt form, yielding $q_t$ as a geometric mixture of $\pi_t$ and $\pi^\star$ (\S\ref{sec:reweighting}). To use this rollout policy efficiently on the fly, we then approximate it with a learnable proposal policy that samples rollouts in a single forward pass (\S\ref{sec:proposal}).
\section{Mixture Proposal for On-Policy Distillation}
\label{sec:method}

\subsection{Reweighting by Relative Teacher Support}
\label{sec:reweighting}
The reweighting framework in \S\ref{sec:setup} requires a metric that determines which tokens should receive more probability. Here, we choose the \textit{relative teacher support}: for a token $v$ at prefix $(x,y_{<j})$, define
\begin{equation*}
s_t(v,x,y_{<j})
=
\log \pi^\star(v \mid x,y_{<j})
-
\log \pi_t(v \mid x,y_{<j}).
\label{eq:relative-teacher-support}
\end{equation*}
A high $s_t$ means that the teacher assigns larger probability to $v$ than the student does. Upweighting such tokens follows two intuitions. First, high teacher probability suggests that the teacher can provide more effective supervision, avoiding noisy supervision on tokens the teacher itself rarely generates. Second, low student probability suggests that the student has not yet fully learned these tokens; if the student already assigns high probability, further emphasizing it provides less benefit. Reweighting by $s_t$ therefore prioritizes tokens that are both well supported by the teacher and not yet captured by the student.

Having specified the reweighting signal, we next control how strongly it shifts the rollout distribution away from the student. We choose $q_t$ by maximizing the expected relative teacher support within a local trust region around the student in KL divergence:
\begin{equation}\label{eq:local-trust-region}
    \max_{q(\cdot\mid x,y_{<j})}
    \;
    \mathbb{E}_{v\sim q(\cdot\mid x,y_{<j})}
    \!\left[
        s_t(v,x,y_{<j})
    \right]
    \quad
    \mathrm{s.t.}
    \quad
    \mathrm{KL}
    \!\left(
        q(\cdot\mid x,y_{<j})
        \,\big\|\,
        \pi_t(\cdot\mid x,y_{<j})
    \right)
    \leq
    \delta .
\end{equation}
The objective encourages tokens with larger relative teacher support, while the KL constraint prevents the rollout distribution from moving too far from the current student policy. Solving this constrained problem yields a student-anchored exponential
reweighting of $\pi_t$:
\begin{restatable}[Geometric mixture of the tilted target]{proposition}{geomixture}
\label{prop:geometric-mixture}
For a fixed prefix $(x,y_{<j})$, there exists a multiplier $\eta\ge 0$
such that the solution to Eq.~\eqref{eq:local-trust-region} takes the form
\begin{equation}\label{eq:local-exponential-tilt}
    q_{\eta,t}^*(v\mid x,y_{<j})
    =
    \frac{
        \pi_t(v\mid x,y_{<j})
        \exp\left(\eta s_t(v,x,y_{<j})\right)
    }{
        Z(x,y_{<j})
    },
\end{equation}
where $\eta$ is the inverse Lagrange multiplier associated with the KL constraint and
$Z(x,y_{<j})=\sum_{v'\in\mathcal{V}}\pi_t(v'\mid x,y_{<j})\exp\left(\eta s_t(v',x,y_{<j})\right)$.
Equivalently, substituting the definition of $s_t$ gives
\begin{equation}\label{eq:geometric-mixture}
    q_{\eta,t}^*(v\mid x,y_{<j})
    =
    \frac{
        \pi_t(v\mid x,y_{<j})^{1-\eta}
        \pi^\star(v\mid x,y_{<j})^{\eta}
    }{
        \sum_{v'\in\mathcal{V}}
        \pi_t(v'\mid x,y_{<j})^{1-\eta}
        \pi^\star(v'\mid x,y_{<j})^{\eta}
    } .
\end{equation}
\end{restatable}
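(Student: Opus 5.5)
We fix the prefix $(x,y_{<j})$ and suppress it from the notation, writing $\pi_t(v)$, $\pi^\star(v)$, $s_t(v)=\log\pi^\star(v)-\log\pi_t(v)$. We assume $\pi_t$ and $\pi^\star$ have full support on $\mathcal{V}$, so that $s_t$ is finite; otherwise we restrict to the common support. Eq.~\eqref{eq:local-trust-region} is then a finite-dimensional problem over the simplex $\Delta(\mathcal{V})$. The objective $q\mapsto\sum_v q(v)s_t(v)$ is linear, and the constraint set $\{q:\mathrm{KL}(q\|\pi_t)\le\delta\}$ is convex and compact. For $\delta>0$, the point $q=\pi_t$ gives $\mathrm{KL}=0<\delta$, so Slater's condition holds. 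Hence strong duality holds and the KKT conditions characterize the optimum. This is the route we would take: Lagrangian duality, followed by a Gibbs variational argument.

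First, we form the Lagrangian
\[
\mathcal{L}(q,\lambda)=\sum_v q(v)s_t(v)-\lambda\Bigl(\sum_v q(v)\log\tfrac{q(v)}{\pi_t(v)}-\delta\Bigr),
\]
with $\lambda\ge 0$, and impose $q\in\Delta(\mathcal{V})$ through a normalization multiplier. For $\lambda>0$, maximizing over the simplex is the classical Gibbs/Donsker--Varadhan problem. We would show this cleanly via the identity
\[
\sum_v q(v)s_t(v)-\lambda\,\mathrm{KL}(q\|\pi_t)=\lambda\log Z-\lambda\,\mathrm{KL}(q\|q^*_{\eta,t}),
\qquad \eta=1/\lambda .
\]
Here $q^*_{\eta,t}\propto\pi_t e^{\eta s_t}$ and $Z=\sum_v\pi_t(v)e^{\eta s_t(v)}$. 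The identity follows by expanding $\log q^*_{\eta,t}=\log\pi_t+\eta s_t-\log Z$. It shows that the unique maximizer of the Lagrangian is exactly Eq.~\eqref{eq:local-exponential-tilt}, and this explains why $\eta$ is the \emph{inverse} multiplier.

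Second, we choose $\lambda$ (equivalently $\eta$) by complementary slackness. Define $f(\eta)=\mathrm{KL}(q^*_{\eta,t}\|\pi_t)$. Then $f(0)=0$ and $f$ is continuous and nondecreasing in $\eta$. Monotonicity holds because $\frac{d}{d\eta}f=\eta\,\mathrm{Var}_{q^*_{\eta,t}}(s_t)\ge 0$, a standard exponential-family computation. We split into two cases. If $s_t$ is constant on the support, then $q^*_{\eta,t}=\pi_t$ for every $\eta$, the objective is constant on the feasible set, and any $\eta\ge0$ works; in particular $\eta=0$ does. Otherwise, either $f(\eta)=\delta$ for some finite $\eta$ by the intermediate value theorem, in which case the constraint is active, or $\sup_\eta f<\delta$. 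The limit $\sup_\eta f$ equals the KL divergence of $\pi_t$ conditioned on $\arg\max s_t$. In that degenerate large-$\delta$ regime the optimum concentrates on $\arg\max s_t$ and is the $\eta\to\infty$ limit. We would note this edge case explicitly and assume $\delta$ is small enough that a finite $\eta$ exists, which is consistent with the "local" framing. Given such an $\eta$, the pair $(q^*_{\eta,t},1/\eta)$ satisfies the KKT conditions, so $q^*_{\eta,t}$ is optimal. It is also unique, since the identity above gives strict concavity in $q$ for fixed $\lambda$.

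Finally, Eq.~\eqref{eq:geometric-mixture} follows by direct substitution: $\pi_t e^{\eta s_t}=\pi_t(\pi^\star/\pi_t)^\eta=\pi_t^{1-\eta}(\pi^\star)^\eta$, and the normalizer transforms in the same way. We expect the main obstacle to be the careful existence argument for a finite multiplier $\eta$. This covers monotonicity of $f$, the boundary behavior as $\eta\to\infty$, and support issues when $\pi^\star$ or $\pi_t$ vanish on some tokens. The Gibbs identity step itself is routine.
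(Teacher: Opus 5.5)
Your proposal is correct. It uses the same Lagrangian backbone as the paper: a KL multiplier $\lambda$, a normalization multiplier, $\eta=1/\lambda$, and then substitution of $s_t$. Where it differs is that you prove two steps the paper only asserts.

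\textbf{What your route adds.} The paper sets the derivative of the Lagrangian to zero, reads off the tilt, and then simply declares that $\eta$ is chosen so that $\mathrm{KL}(q^*_\eta\|\pi_t)=\delta$. It never checks that this stationary point is the constrained maximizer, nor that such an $\eta$ exists. Your steps fill both holes:
\begin{itemize}
\item The identity $\sum_v q(v)s_t(v)-\lambda\,\mathrm{KL}(q\|\pi_t)=\lambda\log Z-\lambda\,\mathrm{KL}(q\|q^*_{\eta,t})$ gives global optimality and uniqueness of the Lagrangian maximizer in one line.
\item The derivative $f'(\eta)=\eta\,\mathrm{Var}_{q^*_{\eta,t}}(s_t)$, together with the intermediate value theorem, supplies the multiplier.
\item KKT sufficiency then finishes the argument.
\end{itemize}

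\textbf{Something the paper glosses over.} Your case analysis shows the statement needs a restriction on $\delta$. Let $M=\arg\max_v s_t(v)$. For nonconstant $s_t$, $f$ increases strictly from $0$ toward $-\log\pi_t(M)$ but never attains it. So for $\delta\ge-\log\pi_t(M)$ no finite $\eta$ exists, and the optimum sits on $M$ with multiplier $\lambda=0$, i.e.\ $\eta\to\infty$. The paper's remark that degenerate cases are ``recovered by $\eta=0$'' does not cover this regime. Your small-$\delta$ assumption is therefore genuinely needed for the proposition as written.

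\textbf{Two small repairs.}
\begin{itemize}
\item Your dichotomy should be $\delta<\sup_\eta f$ versus $\delta\ge\sup_\eta f$, not ``$f(\eta)=\delta$ for some $\eta$'' versus $\sup_\eta f<\delta$. The supremum is not attained: $q^*_{\eta,t}$ has the same support as $\pi_t$, so $\mathrm{Var}_{q^*_{\eta,t}}(s_t)>0$ and $f$ is strictly increasing. Hence $\delta=\sup_\eta f$ also admits no finite $\eta$.
\item ``Restrict to the common support'' is not a harmless reduction when $\pi^\star$ vanishes on tokens charged by $\pi_t$. Let $C$ be the common support. Every tilt with $\eta>0$ drops the mass outside $C$, so $f$ jumps at $\eta=0$ to at least $-\log\pi_t(C)$. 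For $\delta<-\log\pi_t(C)$, every feasible $q$ then has objective $-\infty$. It is cleaner to simply assume full support, as the paper implicitly does.
\end{itemize}
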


The proof is deferred to Appendix~\ref{app:proofs_geometric-mixture}. The solution is a
geometric mixture of the student and teacher policy, governed by a
prefix-dependent scalar $\eta$. 
In this case, the reweighting function in Eq.~\eqref{eq:reweighted-proposal} becomes an \emph{exponential tilt},
$h(v,x,y_{<j})=\exp(\eta s_t(v,x,y_{<j}))$. Here, $s_t$ determines which tokens are upweighted, while $\eta$ controls the strength of this reweighting.
A further discussion, including a comparison with arithmetic mixture, is deferred to Appendix~\ref{app:geometric-arithmetic-interpolation}. In practice, we can restrict $\eta\in[0,1]$ and treat it as a tunable parameter, which yields a continuous
interpolation between the two endpoints: $\eta=0$ recovers $\pi_t$, while
$\eta=1$ recovers $\pi^\star$. Intermediate values balance student anchoring and teacher guidance, with larger $\eta$ placing more mass on tokens with higher relative teacher support.

The improvement in expected relative teacher support from tilting depends on the spread of $s_t$ across the vocabulary. To make this precise, for a fixed prefix $(x,y_{<j})$, let $S_j(q)=\mathbb{E}_{v\sim q}[s_t(v,x,y_{<j})]$ denote the expected relative teacher support under a distribution $q$, and let $\sigma_j(x,y_{<j})=\sqrt{\mathrm{Var}_{v\sim \pi_t}[s_t(v,x,y_{<j})]}$ denote its standard deviation under the student. The following corollary quantifies this improvement under the trust-region constraint.
\begin{restatable}[Local improvement from tilting]{corollary}{localimprovement}
\label{cor:local-improvement}
For a fixed prefix $(x,y_{<j})$, consider the local trust-region problem
in Eq.~\eqref{eq:local-trust-region}. For
sufficiently small \(\delta\),
\begin{equation}
    S_j(q^*_\eta) - S_j(\pi_t)
    =
    \sigma_j(x,y_{<j})\sqrt{2\delta}
    +
    o(\sqrt{\delta}) .
\end{equation}
\end{restatable}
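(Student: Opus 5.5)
We work at a fixed prefix, so write $\pi=\pi_t(\cdot\mid x,y_{<j})$ and $s(v)=s_t(v,x,y_{<j})$. The plan is to use the exponential-tilt form from Proposition~\ref{prop:geometric-mixture} and expand both the KL budget and the objective gain in $\eta$ around $\eta=0$. Introduce the cumulant generating function
\begin{equation*}
\Lambda(\eta)=\log \mathbb{E}_{v\sim\pi}\!\left[e^{\eta s(v)}\right].
\end{equation*}
Two standard identities for tilted families then apply. First, $S_j(q^*_\eta)=\Lambda'(\eta)$. Second,
\begin{equation*}
\mathrm{KL}(q^*_\eta\|\pi)=\eta\Lambda'(\eta)-\Lambda(\eta).
\end{equation*}
Since the vocabulary is finite and $s$ is finite wherever $\pi>0$, $\Lambda$ is smooth (indeed analytic). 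Its derivatives at zero are $\Lambda'(0)=S_j(\pi)$ and $\Lambda''(0)=\sigma_j^2$, with $\Lambda'''(0)$ bounded.

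\textbf{Step 1 (degenerate case).} If $\sigma_j=0$, then $s$ is $\pi$-a.s.\ constant. The objective is then the same for every $q\ll\pi$, and the KL constraint forces $q\ll\pi$, so the gain is exactly $0$ and the claim holds trivially. From here on assume $\sigma_j>0$.

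\textbf{Step 2 (constraint binds).} The objective is linear in $q$ and $\pi$ itself is not optimal, because tilting slightly in the direction $s-S_j(\pi)$ strictly increases the objective. So the optimum lies on the boundary, and the multiplier $\eta=\eta(\delta)>0$ is determined by $\eta\Lambda'(\eta)-\Lambda(\eta)=\delta$. Note that $g(\eta):=\eta\Lambda'-\Lambda$ satisfies $g'(\eta)=\eta\Lambda''(\eta)>0$ for $\eta>0$, so $g$ is strictly increasing. Hence $\eta(\delta)$ is unique and $\eta(\delta)\to0$ as $\delta\to0$.

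\textbf{Step 3 (Taylor expansions).} Expand both quantities in $\eta$:
\begin{equation*}
g(\eta)=\tfrac12\sigma_j^2\eta^2+O(\eta^3),
\qquad
\Lambda'(\eta)-\Lambda'(0)=\sigma_j^2\eta+O(\eta^2).
\end{equation*}
Inverting the first gives $\eta(\delta)=\sqrt{2\delta}/\sigma_j+O(\delta)$. Substituting into the second yields
\begin{equation*}
S_j(q^*_\eta)-S_j(\pi)=\sigma_j\sqrt{2\delta}+O(\delta),
\end{equation*}
which is in particular $\sigma_j\sqrt{2\delta}+o(\sqrt\delta)$.

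\textbf{Main obstacle.} The delicate step is justifying the inversion in Step 3 and the claim in Step 2 that the Lagrangian solution is the exact constrained optimum at that $\delta$. For the inversion, I would use the inverse function theorem applied to $\sqrt{2g(\eta)}$, which equals $\sigma_j\eta\,(1+O(\eta))$ and is smooth with positive derivative at $0$. For optimality, the objective is linear and the feasible set is convex, and Proposition~\ref{prop:geometric-mixture} already supplies the tilted form. A sanity check is also available: by Cauchy--Schwarz / Pinsker-type (Donsker--Varadhan) bounds, any feasible $q$ has gain at most $\sigma_j\sqrt{2\delta}(1+o(1))$, which is consistent with the expansion above. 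Care is needed for tokens where $\pi^\star=0$, since there $s=-\infty$. These receive zero mass under the tilt for every $\eta>0$, so I would restrict $\Lambda$ to the support where $s$ is finite, or assume $\pi^\star>0$ throughout.
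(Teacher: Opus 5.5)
Your proposal is correct and follows essentially the same argument as the paper. Both take the exponential-tilt form from Proposition~\ref{prop:geometric-mixture}, write the score gain and $\mathrm{KL}(q^*_\eta\|\pi_t)=\eta\Lambda'(\eta)-\Lambda(\eta)$ via the cumulant generating function, expand both to second order in $\eta$, and invert $\delta=\tfrac12\sigma_j^2\eta^2+O(\eta^3)$ to get $\eta=\sqrt{2\delta}/\sigma_j+O(\delta)$; the paper centers $\Lambda$ at the mean score, which is purely cosmetic. Your extra steps make rigorous what the paper only asserts: that the constraint binds, that $g'(\eta)=\eta\Lambda''(\eta)>0$ so $\eta(\delta)$ is unique and tends to $0$, and that the inversion is justified by the inverse function theorem.
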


The proof is deferred to Appendix~\ref{app:proofs_local-improvement}. The improvement is
governed by two quantities. The local standard deviation $\sigma_j$
formalizes the spread above: when $\sigma_j$ is small the student already agrees with the teacher up to a constant offset and
tilting has little to redistribute, while a larger $\sigma_j$ indicates
that high- and low-score tokens coexist in the student policy and
tilting can reallocate mass between them. The trust-region radius
$\delta$ controls the allowed deviation from the student, with
$\sqrt{\delta}$ scaling reflecting diminishing returns from larger
local deviations. Appendix~\ref{app:structured-teacher-target-noise} further shows that, under a structured teacher noise model, tilted rollouts reduce exposure to noisy teacher supervision provided a covariance condition holds.

\subsection{Learning a Proposal Rollout Policy}
\label{sec:proposal}
The tilted rollout policy in Eq.~\eqref{eq:geometric-mixture} is expensive to
sample from directly: each decoding step would require running both the student
and teacher, combining their logits, and sampling from the resulting mixture.
This online construction requires synchronizing two forward passes throughout
generation, making it difficult to implement efficiently at scale.

To avoid this online cost, we introduce a proposal policy
$\hat q_t$ that approximates the tilted target. At round $t$,
trajectories are sampled from $\hat q_t$, while the student update is
still defined against $\pi_t$ and $\pi^\star$. The student is updated by
\begin{equation}\label{eq:student-update}
    \pi_{t+1}
    =
    \arg\min_{\pi}
    \mathbb{E}_{x\sim\mathcal{D},\,\tau\sim\hat q_t(\cdot\mid x)}
    \left[
        \sum_{j=1}^{|\tau|}
        w_{t,j-1}(\tau)
        \,
        \mathrm{KL}
        \left(
            \pi(\cdot\mid x,y_{<j})
            \,\|\,
            \pi^\star(\cdot\mid x,y_{<j})
        \right)
    \right],
\end{equation}
where $w_{t,j}(\tau)$ corrects for the gap between the proposal policy $\hat q_t$ and the student $\pi_t$. Following prior work~\citep{gu2024minillm,levine2020offline}, we use the clipped one-step ratio
\begin{equation}\label{eq:clipped_IS-_ratio}
    w_{t,j}(\tau)=\mathrm{clip}\!\left( \frac{\pi_t(v\mid x,y_{<j}) }{ \hat q_t(v\mid x,y_{<j})} ,\,1/C,\,C\right)
\end{equation}
with $w_{t,0}(\tau)=1$ and clipping threshold $C>1$.
To learn the proposal policy for the next round, we form the tilted target
at the updated student,
\begin{equation}\label{eq:target}
    q_{\eta,t+1}^*(v\mid x,y_{<j})
    =
    \frac{
        \pi_{t+1}(v\mid x,y_{<j})^{1-\eta}
        \pi^\star(v\mid x,y_{<j})^{\eta}
    }{
        \sum_{v'\in\mathcal{V}}
        \pi_{t+1}(v'\mid x,y_{<j})^{1-\eta}
        \pi^\star(v'\mid x,y_{<j})^{\eta}
    },
\end{equation}
which serves as the target for updating $\hat q_{t+1}$:
\begin{equation}\label{eq:qhat-update}
    \hat q_{t+1}
    =
    \arg\min_{\hat q}
    \mathbb{E}_{x\sim\mathcal{D},\,\tau\sim\hat q_t(\cdot\mid x)}
    \left[
        \sum_{j=1}^{|\tau|}
        \mathrm{KL}
        \left(
            \hat q(\cdot\mid x,y_{<j})
            \,\|\,
            q_{\eta,t+1}^*(\cdot\mid x,y_{<j})
        \right)
    \right].
\end{equation}

The procedure alternates between two updates: the student update uses trajectories sampled from the proposal $\hat q_t$, while the proposal update moves $\hat q_t$ toward the next tilted target $q_{\eta,t+1}^*$. As a result, the teacher is queried only during training to define the learning target, while rollout generation uses only $\hat q_t$. The full procedure (Algorithm~\ref{alg:amortized-tilted-sampling}) provides an efficient on-the-fly approximation to the geometric mixture rollout while keeping student updates aligned with the OPD objective. Appendix~\ref{app:additional_proposal} evaluates the quality of this approximation. Further discussion is in Appendix~\ref{app:discussion}.

\begin{algorithm}[t]
\caption{Mixture Proposal for On-Policy Distillation}
\label{alg:amortized-tilted-sampling}
\begin{algorithmic}[1]
\Require initial student $\pi_0$, teacher $\pi^\star$, training prompt distribution  $\mathcal{D}$, rounds $R$, tilt strength $\eta$.
\State Initialize proposal policy $\hat q_0 \leftarrow \pi_0$
\For{$t=0,1,\ldots,R-1$}
    \State Sample a batch of prompts from $\mathcal{D}$
    \State Generate trajectories $\tau\sim \hat q_t(\cdot\mid x)$ for each prompt $x$
    \State Update the student $\pi_{t+1}$ by the distillation objective with importance weights $w_{t,j}(\tau)$ in Eq.~\eqref{eq:clipped_IS-_ratio}: 
    \[
        \pi_{t+1}
        =
        \arg\min_{\pi}
        \mathbb{E}_{x\sim\mathcal{D},\,\tau\sim\hat q_t(\cdot\mid x)}
        \left[
            \sum_{j=1}^{|\tau|}
            w_{t,j-1}(\tau)
            \,
            \mathrm{KL}
            \left(
                \pi(\cdot\mid x,y_{<j})
                \,\|\,
                \pi^\star(\cdot\mid x,y_{<j})
            \right)
        \right].
    \]
    \State Define the tilted target $q_{\eta,t+1}^*$ from $\pi_{t+1}$ and $\pi^\star$ using Eq.~\eqref{eq:target}
    \State Update $\hat q_{t+1}$ toward target $q_{\eta,t+1}^*$ based on  
    \[
       \hat q_{t+1}
        =
        \arg\min_{\hat q}
        \mathbb{E}_{x\sim\mathcal{D},\,\tau\sim\hat q_t(\cdot\mid x)}
        \left[
            \sum_{j=1}^{|\tau|}
            \mathrm{KL}
            \left(
                \hat q(\cdot\mid x,y_{<j})
                \,\|\,
                q_{\eta,t+1}^*(\cdot\mid x,y_{<j})
            \right)
        \right].
    \]
\EndFor
\State \Return final student $\pi_R$
\end{algorithmic}
\end{algorithm}

\section{Experiments}
\label{sec:experiment}
\subsection{Experimental Setup}

\paragraph{Models and datasets.}
We evaluate two student scales in the Qwen3 model family~\citep{yang2025qwen3}, Qwen3-1.7B-Base and Qwen3-4B-Base, distilled from Qwen3-4B-Instruct-2507 as the teacher. For the main math reasoning experiments, we train on
DAPO-Math-17K~\citep{yu2025dapo} and evaluate on AIME-24, AIME-25, AIME-26,
AMC-23, HMMT 2025 (Feb \& Nov)~\citep{balunovic2025matharena}, Minerva
Math~\citep{lewkowycz2022solving}, and
OlympiadBench~\citep{he2024olympiadbench}. We report avg@32 for all benchmarks except OlympiadBench, where we report avg@8 due to its large size. For additional evaluation, we distill Qwen3-4B-Base on two domains: knowledge QA
and medical reasoning. In knowledge QA, the model is trained on
ARPO-10k~\citep{dong2025agentic} and evaluated on
GPQA-Diamond~\citep{rein2023gpqa}, ARC-Challenge~\citep{clark2018think}, and
MMLU-Pro~\citep{wang2024mmlu}, with avg@8 reported for all three benchmarks.
In medical reasoning, we use
M23K~\citep{huang2025m1} for training and evaluate on
PubMed~\citep{jin2019pubmedqa}, MedQA~\citep{jin2021disease}, Lancet \&
NEJM~\citep{huang2025m1}, and MedX~\citep{zuo2025medxpertqa}. Following
\cite{huang2025m1}, we report avg@1. Additional details are provided in Appendix~\ref{app:details_evaluation}.

\paragraph{Baselines.} In addition to standard OPD, we compare against three recent variants: ExOPD~\cite{yang2026learning}, which applies an extrapolative supervision relative to the student base model; EOPD~\cite{jin2026entropy}, which augments OPD with an entropy-aware forward-KL term at high-entropy teacher positions; and VETO~\cite{jang2026stable}, which uses a geometric mixture of the student and teacher as the distillation target. 

\paragraph{Implementation details.}
All methods are trained using the \texttt{verl}~\cite{sheng2024hybridflow} framework
with vLLM~\cite{kwon2023efficient} as the rollout generation backend.
We use one rollout per prompt, rollout temperature $0.8$, and batch size $256$ for math reasoning and $128$ for the
additional knowledge QA and medical reasoning experiments.
Local-support methods use teacher top-$16$ support.
The proposal policy $\hat{q}_t$ is a full model
initialized from the current student at each round.
All experiments are conducted on GH200 GPUs.
Additional implementation details are provided in Appendix~\ref{app:details_implementation}.

\subsection{Main Results}

\begin{table*}[t]
\centering
\footnotesize
\setlength{\tabcolsep}{3.5pt}
\renewcommand{\arraystretch}{1.08}
\vspace{-0.1in}
\caption{Evaluation results on \textbf{math benchmarks}. Each cell reports accuracy (\%) and \graybg{average} \graybg{generation length}. \textbf{Imp.} compares \method with the best baseline in each row, showing the accuracy gain in percentage points and the relative length change. Best and second-best accuracies within each model block are bolded and underlined, respectively.}
\begin{tabular}{@{}lccccc>{\columncolor{OursTint}}cc@{}}
\toprule
\rowcolor{SectionGray}
\multicolumn{8}{c}{\textbf{Qwen3-1.7B-Base}} \\
\midrule
\textbf{Benchmark} 

& \textbf{Base} & \textbf{OPD} & \textbf{ExOPD} & \textbf{EOPD} & \textbf{VETO} 
& \cellcolor{OursTint}\textbf{\method} & \textbf{Imp.} \\
\midrule
AMC23
& \acclen{13.05}{6.96k}
& \acclen{39.22}{16.24k}
& \acclen{40.00}{16.37k}
& \acclen{\underline{41.02}}{16.16k}
& \acclen{39.14}{10.30k}
& \acclen{\textbf{41.09}}{9.78k}
& \impinline{\accplus{0.07}}{\lensave{39.5\%}} \\

AIME24
& \acclen{1.67}{8.00k}
& \acclen{11.46}{16.38k}
& \acclen{\textbf{12.92}}{16.38k}
& \acclen{11.56}{16.35k}
& \acclen{10.62}{14.90k}
& \acclen{\underline{12.50}}{14.65k}
& \impinline{\accminus{0.42}}{\lensave{10.6\%}} \\

AIME25
& \acclen{1.15}{7.66k}
& \acclen{7.19}{16.36k}
& \acclen{7.29}{16.38k}
& \acclen{\underline{8.65}}{16.29k}
& \acclen{7.40}{15.06k}
& \acclen{\textbf{8.75}}{14.79k}
& \impinline{\accplus{0.10}}{\lensave{9.2\%}} \\

AIME26
& \acclen{0.73}{7.73k}
& \acclen{7.71}{16.33k}
& \acclen{7.71}{16.37k}
& \acclen{7.81}{16.31k}
& \acclen{\underline{7.92}}{15.00k}
& \acclen{\textbf{9.06}}{14.85k}
& \impinline{\accplus{1.14}}{\lensave{1.0\%}} \\

H25F
& \acclen{0.21}{7.82k}
& \acclen{\underline{3.12}}{16.32k}
& \acclen{2.92}{16.37k}
& \acclen{\underline{3.12}}{16.32k}
& \acclen{2.81}{15.50k}
& \acclen{\textbf{3.65}}{15.48k}
& \impinline{\accplus{0.53}}{\lensave{5.2\%}} \\

H25N
& \acclen{0.63}{6.78k}
& \acclen{\underline{1.46}}{16.34k}
& \acclen{1.35}{16.37k}
& \acclen{1.25}{16.34k}
& \acclen{1.35}{15.88k}
& \acclen{\textbf{2.29}}{15.74k}
& \impinline{\accplus{0.83}}{\lensave{3.7\%}} \\

Minerva
& \acclen{8.96}{6.90k}
& \acclen{24.35}{15.81k}
& \acclen{25.57}{15.14k}
& \acclen{25.88}{15.74k}
& \acclen{\underline{26.86}}{8.57k}
& \acclen{\textbf{28.25}}{7.02k}
& \impinline{\accplus{1.39}}{\lensave{18.0\%}} \\

Olymp.
& \acclen{12.54}{6.65k}
& \acclen{33.85}{16.24k}
& \acclen{33.72}{16.36k}
& \acclen{\underline{34.20}}{16.20k}
& \acclen{32.94}{11.38k}
& \acclen{\textbf{34.51}}{10.77k}
& \impinline{\accplus{0.31}}{\lensave{33.5\%}} \\

\midrule
\textbf{Avg.}
& \acclen{4.87}{7.31k}
& \acclen{16.05}{16.25k}
& \acclen{16.43}{16.22k}
& \acclen{\underline{16.69}}{16.21k}
& \acclen{16.13}{13.32k}
& \acclen{\textbf{17.51}}{12.89k}
& \impinline{\accplus{0.82}}{\lensave{20.5\%}} \\

\midrule
\rowcolor{SectionGray}
\multicolumn{8}{c}{\textbf{Qwen3-4B-Base}} \\
\midrule
\textbf{Benchmark} 
& \textbf{Base} & \textbf{OPD} & \textbf{ExOPD} & \textbf{EOPD} & \textbf{VETO} 
& \cellcolor{OursTint}\textbf{\method} & \textbf{Imp.} \\
\midrule
AMC23
& \acclen{13.44}{1.87k}
& \acclen{56.33}{16.26k}
& \acclen{56.80}{16.34k}
& \acclen{61.25}{16.32k}
& \acclen{\underline{63.12}}{6.51k}
& \acclen{\textbf{69.61}}{6.08k}
& \impinline{\accplus{6.49}}{\lensave{6.6\%}} \\

AIME24
& \acclen{2.50}{1.72k}
& \acclen{23.02}{16.37k}
& \acclen{22.50}{16.34k}
& \acclen{22.19}{16.31k}
& \acclen{\underline{23.85}}{12.47k}
& \acclen{\textbf{28.65}}{12.32k}
& \impinline{\accplus{4.80}}{\lensave{1.2\%}} \\

AIME25
& \acclen{2.08}{1.37k}
& \acclen{23.44}{16.37k}
& \acclen{22.29}{16.38k}
& \acclen{23.44}{16.30k}
& \acclen{\underline{24.69}}{11.63k}
& \acclen{\textbf{28.96}}{11.42k}
& \impinline{\accplus{4.27}}{\lensave{1.8\%}} \\

AIME26
& \acclen{0.83}{1.77k}
& \acclen{19.69}{16.33k}
& \acclen{\underline{21.15}}{16.33k}
& \acclen{20.73}{16.31k}
& \acclen{20.83}{11.27k}
& \acclen{\textbf{25.00}}{11.27k}
& \impinline{\accplus{3.85}}{\lensave{31.0\%}} \\

H25F
& \acclen{0.42}{1.83k}
& \acclen{\underline{13.02}}{16.35k}
& \acclen{12.19}{16.34k}
& \acclen{12.71}{16.35k}
& \acclen{11.56}{13.06k}
& \acclen{\textbf{15.83}}{13.09k}
& \impinline{\accplus{2.81}}{\lensave{20.0\%}} \\

H25N
& \acclen{2.50}{1.92k}
& \acclen{12.19}{16.37k}
& \acclen{\underline{12.81}}{16.38k}
& \acclen{12.19}{16.35k}
& \acclen{10.73}{13.70k}
& \acclen{\textbf{15.42}}{13.32k}
& \impinline{\accplus{2.61}}{\lensave{18.7\%}} \\

Minerva
& \acclen{9.88}{2.26k}
& \acclen{32.74}{16.28k}
& \acclen{33.80}{16.27k}
& \acclen{34.79}{16.27k}
& \acclen{\underline{40.26}}{2.52k}
& \acclen{\textbf{42.65}}{2.43k}
& \impinline{\accplus{2.39}}{\lensave{3.4\%}} \\

Olymp.
& \acclen{11.42}{1.81k}
& \acclen{47.24}{16.28k}
& \acclen{47.39}{16.33k}
& \acclen{47.42}{16.30k}
& \acclen{\underline{53.41}}{7.14k}
& \acclen{\textbf{56.82}}{7.13k}
& \impinline{\accplus{3.41}}{\lensave{0.1\%}} \\

\midrule
\textbf{Avg.}
& \acclen{5.38}{1.82k}
& \acclen{28.46}{16.33k}
& \acclen{28.62}{16.34k}
& \acclen{29.34}{16.31k}
& \acclen{\underline{31.06}}{9.79k}
& \acclen{\textbf{35.37}}{9.63k}
& \impinline{\accplus{4.31}}{\lensave{1.6\%}} \\

\bottomrule
\end{tabular}
\vspace{-0.22in}
\label{tab:main}
\end{table*}
\paragraph{Main results on math reasoning.}
Table~\ref{tab:main} reports accuracy and average generation length across eight
math benchmarks at two student scales. On Qwen3-4B-Base, our method achieves the
best accuracy on all eight benchmarks, improving over the strongest baseline by
$+4.31$ on average. On Qwen3-1.7B-Base, our method is best on 7 out of 8 benchmarks, with an average gain of $+0.82$. Appendix~\ref{app:additional_scaling} further scales to a larger teacher and student. Overall, these results show that optimizing the rollout policy itself can further improve OPD beyond modifying the distillation objective or target distribution. The comparison with VETO is particularly informative: VETO also uses a geometric mixture of the student and teacher, but applies it as the
\emph{distillation target} rather than the \emph{rollout policy}. The gap between VETO and our method thus highlights the benefit of reweighting the rollout distribution rather than the distillation target.

Table~\ref{tab:main} also shows that our method reduces generation length.
While some baselines often approach the max token limit, our method
generates substantially shorter responses while improving accuracy. On Qwen3-4B-Base, our method reduces the average generation length by $41\%$ relative to OPD while
improving average accuracy by $+6.91$. This length gap mainly comes from the baselines producing degenerate responses that fail to terminate, rather than from our method truncating valid reasoning. Appendix~\ref{app:additional_decoding} further confirms this with a finer-grained analysis.
Fig.~\ref{fig:case-tritium} provides a representative example: our method produces a concise correct solution in 319 tokens, whereas OPD reaches the 16.4k limit with repeated final-answer patterns and EOPD reaches the same limit with spurious problem-rejection behavior, ultimately answering incorrectly. Both failure modes (benign repetition and active divergence) are consistent with the mechanism framed in \S\ref{sec:setup}, where rollouts drifting beyond the teacher's reachable region receive unreliable supervision.
Together, these results suggest that the tilted
proposal policy helps avoid the degenerate patterns observed in
standard OPD~\cite{fu2026revisiting,gu2024minillm}, rather than merely trading
accuracy for shorter outputs.

\begin{table*}[t]
\centering
\footnotesize
\setlength{\tabcolsep}{3.5pt}
\renewcommand{\arraystretch}{1.08}
\caption{Evaluation results on \textbf{knowledge QA and medical reasoning} benchmarks on Qwen3-4B-Base. Each cell reports accuracy (\%) and \graybg{average generation length}. \textbf{Imp.} compares \method with the best baseline in each row, showing the accuracy gain in percentage points and the relative length change. Best and second-best accuracies within each section are bolded and underlined, respectively.}
\begin{tabular}{@{}lccccc>{\columncolor{OursTint}}cc@{}}
\toprule
\rowcolor{SectionGray}
\multicolumn{8}{c}{\textbf{Knowledge Question Answering}} \\
\midrule
\textbf{Benchmark} 
& \textbf{Base} & \textbf{OPD} & \textbf{ExOPD} & \textbf{EOPD} & \textbf{VETO} 
& \cellcolor{OursTint}\textbf{\method} & \textbf{Imp.} \\
\midrule
GPQA-D
& \acclen{7.32}{8.22k}
& \acclen{38.51}{16.37k}
& \acclen{40.03}{16.38k}
& \acclen{\underline{41.60}}{16.35k}
& \acclen{40.47}{5.78k}
& \acclen{\textbf{42.17}}{6.34k}
& \impinline{\accplus{0.57}}{\lensave{61.3\%}} \\

ARC-C
& \acclen{24.82}{3.92k}
& \acclen{88.52}{8.18k}
& \acclen{85.38}{8.19k}
& \acclen{87.95}{8.17k}
& \acclen{\underline{91.45}}{0.33k}
& \acclen{\textbf{92.37}}{0.32k}
& \impinline{\accplus{0.92}}{\lensave{5.1\%}} \\

MMLU-P
& \acclen{9.30}{5.58k}
& \acclen{62.45}{8.19k}
& \acclen{63.75}{8.20k}
& \acclen{62.82}{8.15k}
& \acclen{\underline{64.10}}{1.42k}
& \acclen{\textbf{65.08}}{1.31k}
& \impinline{\accplus{0.98}}{\lensave{7.2\%}} \\

\midrule
\textbf{Avg.}
& \acclen{13.81}{5.91k}
& \acclen{63.16}{10.91k}
& \acclen{63.05}{10.92k}
& \acclen{64.12}{10.89k}
& \acclen{\underline{65.34}}{2.51k}
& \acclen{\textbf{66.54}}{2.65k}
& \impinline{\accplus{1.20}}{\lenmore{5.7\%}} \\

\midrule
\rowcolor{SectionGray}
\multicolumn{8}{c}{\textbf{Medical Reasoning}} \\
\midrule
\textbf{Benchmark} 
& \textbf{Base} & \textbf{OPD} & \textbf{ExOPD} & \textbf{EOPD} & \textbf{VETO} 
& \cellcolor{OursTint}\textbf{\method} & \textbf{Imp.} \\
\midrule
PubMed
& \acclen{44.50}{3.53k}
& \acclen{73.00}{8.00k}
& \acclen{\underline{74.10}}{8.19k}
& \acclen{72.10}{8.19k}
& \acclen{73.20}{0.42k}
& \acclen{\textbf{74.80}}{0.43k}
& \impinline{\accplus{0.70}}{\lensave{94.7\%}} \\

MedQA
& \acclen{55.46}{1.28k}
& \acclen{72.58}{7.99k}
& \acclen{73.13}{8.19k}
& \acclen{71.72}{8.19k}
& \acclen{\underline{73.45}}{1.02k}
& \acclen{\textbf{74.86}}{1.01k}
& \impinline{\accplus{1.41}}{\lensave{0.8\%}} \\

Lancet
& \acclen{58.01}{1.22k}
& \acclen{61.41}{8.00k}
& \acclen{\underline{63.83}}{8.19k}
& \acclen{60.44}{8.19k}
& \acclen{63.59}{0.96k}
& \acclen{\textbf{64.56}}{0.92k}
& \impinline{\accplus{0.73}}{\lensave{88.8\%}} \\

MedX
& \acclen{10.35}{3.86k}
& \acclen{\underline{16.01}}{8.00k}
& \acclen{15.73}{8.19k}
& \acclen{\underline{16.01}}{8.19k}
& \acclen{14.77}{1.26k}
& \acclen{\textbf{16.29}}{1.20k}
& \impinline{\accplus{0.28}}{\lensave{85.0\%}} \\

NEJM
& \acclen{55.56}{0.88k}
& \acclen{62.19}{8.00k}
& \acclen{63.35}{8.19k}
& \acclen{63.35}{8.17k}
& \acclen{\underline{63.52}}{0.96k}
& \acclen{\textbf{65.34}}{0.84k}
& \impinline{\accplus{1.82}}{\lensave{12.6\%}} \\

\midrule
\textbf{Avg.}
& \acclen{44.78}{2.15k}
& \acclen{57.04}{8.00k}
& \acclen{\underline{58.03}}{8.19k}
& \acclen{56.72}{8.19k}
& \acclen{57.71}{0.92k}
& \acclen{\textbf{59.17}}{0.88k}
& \impinline{\accplus{1.14}}{\lensave{89.3\%}} \\

\bottomrule
\end{tabular}
\vspace{-0.1in}
\label{tab:knowledge_medical_results_acc_len_imp}
\end{table*}

\begin{figure}[t]
\centering
\footnotesize
\setlength{\fboxsep}{4pt}
\fcolorbox{black!15}{CaseGray}{%
\begin{minipage}{0.95\linewidth}
\textbf{Prompt.} How many down quarks does a tritium ($H^3$) nucleus contain?
\quad \textbf{Ground truth.} $\boxed{5}$

\vspace{0.25em}
\setlength{\tabcolsep}{3pt}
\renewcommand{\arraystretch}{0.98}
\begin{tabularx}{\linewidth}{@{}l c c Y@{}}
\toprule
Method & Outcome & Length & Raw output excerpt \\
\midrule
\method (Ours)
& \correctbadge
& \lenbadge{319 tokens}
& \snippetcell{%
    \rawline{... Total down quarks = 1 + 2 + 2 = **\goodhl{5 down quarks}**}\par\vspace{0.05em}
    \rawline{... Final answer \quad \textdollar{}\textdollar{} \goodhl{\textbackslash boxed\{5\}} \textdollar{}\textdollar{}}
  } \\
\midrule

OPD
& \correctbadge
& \limitbadge
& \snippetcell{%
    \rawline{... Final Answer: \quad \goodhl{\textbackslash boxed\{5\}}}\par\vspace{0.05em}
    \rawline{\patternhl{Final Answer: \textdollar{}\textbackslash boxed\{5\}\textdollar{}} ... \patternhl{Final Answer: \textdollar{}\textbackslash boxed\{5\}\textdollar{}} ... \patternhl{Final Answer: \textdollar{}\rawopenboxed}}
  } \\
\midrule

EOPD
& \wrongbadge
& \limitbadge
& \snippetcell{%
    \rawline{... So **\wronghl{we must reject the problem}**.}\par\vspace{0.05em}
    \rawline{... the \wronghl{problem has a typo} -- or perhaps the **tritium nucleus has charge +}
  } \\
\bottomrule
\end{tabularx}
\end{minipage}}
\vspace{-0.4em}
\caption{\textbf{Case study} on Minerva Math (id 141). We show selected snippets from the raw model outputs. OPD is counted as correct but reaches our 16.4k-token generation limit with repeated final-answer patterns. EOPD also reaches this limit and fails with degenerate spurious behavior. \goodhl{Green}, \patternhl{amber}, and \wronghl{red} indicate correct spans, repetition, and erroneous degeneration, respectively.}
\vspace{-0.18in}
\label{fig:case-tritium}
\end{figure}

\paragraph{Generalization beyond math.}
To validate whether \method generalizes beyond math reasoning, we further
evaluate it on 3 knowledge QA and 5 medical reasoning benchmarks. As shown in
Table~\ref{tab:knowledge_medical_results_acc_len_imp}, on Qwen3-4B-Base, our method achieves the best accuracy on all tasks, improving the strongest average baseline from $65.34$ to $66.54$ on knowledge QA and from $58.03$
to $59.17$ on medical reasoning. The same length-accuracy pattern also
persists in these domains: compared with OPD, the average generation length
drops from $10.91$k to $2.65$k on knowledge QA and from $8.00$k to $0.88$k on
medical reasoning, while accuracy improves in both cases. These results indicate that \method generalizes beyond mathematical reasoning to other domains.

\subsection{Training Efficiency}
\begin{figure}[t]
    \centering
    \begin{minipage}[t]{0.6\linewidth}
        \centering
        \vspace{0pt}
        \includegraphics[width=\linewidth]{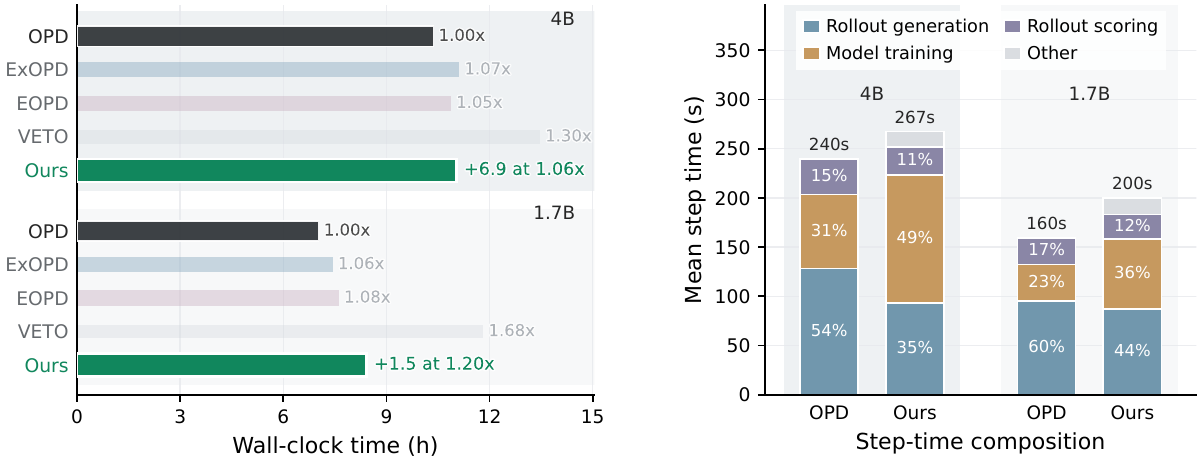}
        \captionsetup{font=footnotesize, skip=3pt}
        \caption{\textbf{Training efficiency comparison.}
        Left: wall-clock training time across methods. Right: step-time breakdown into rollout generation, model training, rollout scoring, and other overhead.}
        \label{fig:training-time-efficiency}
    \end{minipage}
    \hfill
    \begin{minipage}[t]{0.36\linewidth}
        \centering
        \vspace{0pt}
        \scriptsize
        \setlength{\tabcolsep}{3.2pt}
        \renewcommand{\arraystretch}{1.02}
        \captionsetup{font=footnotesize, skip=3pt}
        \captionof{table}{Comparison with OPD using extra student updates on Qwen3-4B-Base math benchmarks. $K$ denotes the number of model update steps performed in each training round.}
        \begin{tabular}{@{}lccc@{}}
            \toprule
            \textbf{Metric} 
            & \textbf{OPD} 
            & \textbf{OPD} 
            & \textbf{Ours} \\
            & \textbf{($K{=}1$)}
            & \textbf{($K{=}2$)}
            & \\
            \midrule
            Avg. Perf. (\%) 
            & 28.46 & 28.59 & 35.37 \\
            Rollout Gen. (s) 
            & 128.37 & 130.58 & 94.68 \\
            Training (s) 
            & 75.00 & 150.01 & 131.12 \\
            Peak Mem. (GB) 
            & 123.65 & 124.04 & 141.99 \\
            \bottomrule
        \end{tabular}
        \label{tab:training-efficiency}
    \end{minipage}
    \vspace{-1.3em}
\end{figure}
Since our method trains an additional proposal policy, we compare its training
cost with existing OPD variants. Fig.~\ref{fig:training-time-efficiency}
reports the wall-clock efficiency and step-time breakdown. Although our method
introduces an additional proposal policy update, it maintains a favorable
accuracy-time trade-off throughout training, with only about $13\%$ extra
wall-clock time on average. Note that VETO is trained for one extra epoch to reach its optimal performance, which accounts for its longer wall-clock time. The step-time breakdown further shows where the extra cost comes from. Our method spends more time on model training due to the additional proposal-policy update, but it partially offsets this cost through faster rollout generation,
as it samples fewer degenerate long responses.

Table~\ref{tab:training-efficiency} further examines whether the improvement is simply due to extra optimization compute. It reports the average performance on math benchmarks along with average per-round rollout generation time, model training time, and peak memory, comparing our method with OPD using one or two student update steps per round. Doubling the number of OPD student updates per round nearly doubles the training time from $75.00$s to $150.01$s, but only changes average accuracy from $28.46\%$ to $28.59\%$. In contrast, our method reaches $35.37\%$ average accuracy with $131.12$s of training time per round, lower than OPD with two updates. The additional proposal policy adds only modest memory overhead ($141.99$ vs.~$\sim$$124$GB) and is partly offset by faster rollout generation ($94.68$ vs.~$\sim$$130$s) since \method generates shorter responses. Together, these numbers indicate that the gain comes from improving the rollout distribution rather than merely spending more compute on student updates.
\begin{figure}[t]
    \centering
    \includegraphics[width=0.98\linewidth]{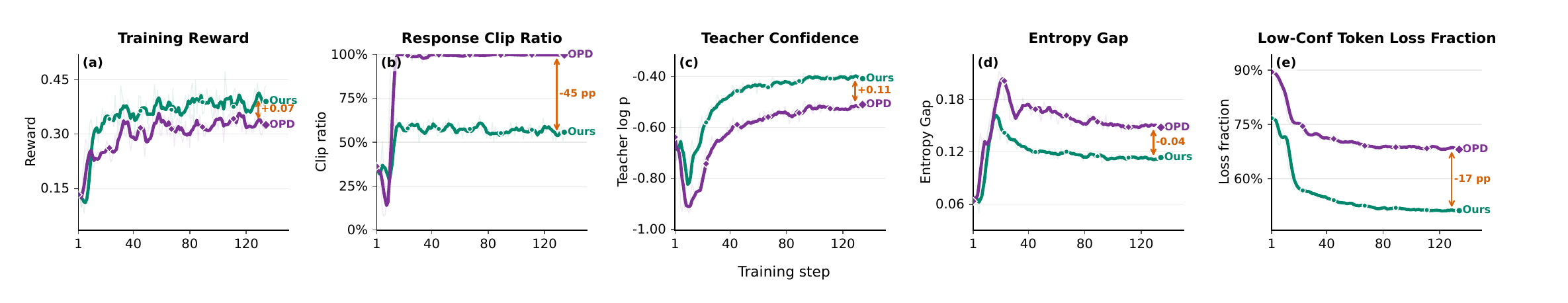}
    \caption{\textbf{Training diagnostics.}
    We compare standard OPD and our tilted rollout proposal over training.
    (a) Training reward. (b) Fraction of responses clipped by the maximum
    generation length. (c) Average teacher log-probability on generated tokens.
    (d) Student--teacher entropy gap. (e) Fraction of token-level loss from
    low-confidence teacher positions.}
    \label{fig:diagnostics}
    \vspace{-1.0em}
\end{figure}
\subsection{Analysis}
\label{sec:analysis}

\textbf{Training Diagnostics.} Fig.~\ref{fig:diagnostics} provides diagnostics for the mechanism behind the
improvements. Panels (a) and (b) first rule out the explanation that our method
only shortens generations by suppressing reasoning. Our method obtains a higher
training reward than OPD, while reducing the fraction of responses that reach
the maximum length by 45 points. Thus, the shorter outputs mainly come from fewer degraded responses hitting the maximum length, rather than from suppressing useful reasoning.
\pagebreak
\begin{wraptable}{r}{0.41\linewidth}
    \vspace{-0.1em}
    \centering
    \small
    \setlength{\tabcolsep}{4.2pt}
    \renewcommand{\arraystretch}{1.05}
    \caption{Effect of the coefficient $\eta$ on selected Qwen3-4B-Base math benchmarks. Scores are reported as accuracy (\%).}
    \begin{tabular}{lcccc}
        \toprule
        $\eta$ & AIME26 & H25N & Minerva & Avg. \\
        \midrule
        0.10 & 23.33 & 15.10 & 42.36 & 26.93 \\
        0.15 & 24.17 & \textbf{15.42} & \textbf{42.65} & 27.41 \\
        0.20 & \textbf{25.00} & \textbf{15.42} & \textbf{42.65} & \textbf{27.69} \\
        0.30 & \underline{24.69} & 15.31 & 42.43 & \underline{27.48} \\
        \bottomrule
    \end{tabular}
    \label{tab:eta-sweep}
    \vspace{-1.0em}
\end{wraptable}
Panels (c)--(e) further show that \method changes the trajectory
distribution used for student supervision. The generated trajectories receive
higher teacher log-probability ($+0.11$), indicating that they remain more
likely under the teacher policy. Following \cite{li2026rethinking}, we examine the student--teacher entropy gap, which shrinks by $0.04$ under our method, indicating that the student's predictive confidence is more aligned with the teacher's. In (e), the fraction of loss from low-confidence teacher positions drops by 17 points, indicating that student updates are increasingly driven by confident teacher supervision. Together with
Fig.~\ref{fig:degradation}, these results clarify the failure mode of
OPD: as rollouts drift away from trajectories likely under the teacher, training
places more loss on low-confidence teacher tokens. The tilted proposal reduces
this pattern, making student updates less dominated by such degraded rollouts.

\paragraph{Impact of the Tilt Strength $\eta$.}
The tilt strength $\eta$ is a hyperparameter that controls how strongly the
rollout policy deviates from the current student toward the teacher. A small $\eta$ keeps the rollout close to the student, while a large $\eta$
weakens on-policy training. As shown in Table~\ref{tab:eta-sweep}, average
accuracy varies within about $0.76$ across the range, peaking
at $\eta=0.20$, indicating that \method is not highly sensitive to this
hyperparameter. Appendix~\ref{app:additional_topk} further varies the teacher support size.
\section{Related Work}
\label{sec:related}

\textbf{Reasoning in Large Language Models.}\quad
Recent progress on LLM reasoning~\citep{ouyang2022training,zelikman2022star,zhao2026recontextrecursiveevidencereplay}
has been driven by two closely related post-training paradigms. Reinforcement learning with verifiable rewards optimizes the policy against
sparse outcome-level feedback, and has emerged as the dominant recipe behind
recent reasoning-focused
models~\citep{shao2024deepseekmath,yu2025dapo,zheng2025group,ren2025riskporiskbasedpolicyoptimization,guo2025deepseek,team2026kimi},
yielding strong gains on mathematical, coding, and agentic reasoning~\cite{dong2025agentic,zou2025latent,xu2026prune,zou2026recursive,zou2026hybrid}.
Distillation instead supervises the student with dense token-level signals
from a stronger
teacher~\citep{hinton2002training,sanh2019distilbert,kim2016sequence}, and
has become a core component of reasoning-oriented post-training
pipelines~\citep{coreteam2025mimounlockingreasoningpotential,deepseekai2026deepseekv4,team2026qwen3},
often paired with distilled reasoning
corpora~\citep{guha2025openthoughts,zhao20251,lozhkov2025openr1math220k,ahmad2025opencodereasoning}
or used to transfer reasoning across scales and
modalities~\citep{bercovich2025llama,peng2025skywork}.

\textbf{On-Policy Distillation.}\quad
On-policy distillation lets the student generate its own trajectories and
receive token-level teacher
supervision~\citep{agarwal2024onpolicy,gu2024minillm,lu2025onpolicydistillation},
addressing the train--test mismatch of off-policy
distillation~\citep{ko2024distillm,gudibande2023false,wen2023f,lin2020autoregressive}.
Owing to its efficiency in transferring strong capabilities from large
teachers into mid-scale students, OPD has been explored along several
directions, including cross-family
distillation~\citep{patiño2025_unlocking_on_policy_distillation_for_any_model_family},
multi-teacher guidance
aggregation~\citep{gu2026co,xiao2026mimo}, black-box
distillation~\citep{ye2025black}, and self-distillation schemes that
internalize the model's in-context capabilities into its
parameters~\citep{yang2025distilling,he2026self,zhao2026self,hubotter2026reinforcement}.
Despite this progress, recent studies have reported failure modes such as
repetition, overlong generations, and entropy
collapse~\citep{fu2026revisiting,jin2026entropy,ko2026scaling,li2026rethinking}.
Existing remedies mainly modify the supervision target, via
alternative divergences~\citep{agarwal2024onpolicy}, extrapolative or
entropy-aware objectives~\citep{yang2026learning,jin2026entropy}, and target
reformulations~\citep{jang2026stable,ko2026scaling}, or adjust the
\emph{rollout policy} through heuristic teacher--student
interleaving~\citep{gu2024minillm,xuspeculative}. In this work, we instead study the rollout policy itself in OPD.

\section{Conclusion}
\label{sec:conclusion}
We study OPD and identify a compounding teacher--student mismatch: student
rollouts can drift into low teacher-reachability regions, where noisy
supervision reinforces degenerate patterns across rounds. To address this, we introduce \method, which formulates the rollout policy
as a local trust-region problem under relative teacher support, yielding a
geometric mixture of the student and teacher. We then approximate this
mixture with a learnable proposal policy that generates rollouts in a single forward pass. Across 16 math, knowledge, and medical reasoning benchmarks, \method
consistently outperforms OPD and recent variants.
Diagnostics show that these gains come from shifting rollouts toward
higher teacher-reachable trajectories, suggesting that the rollout policy
is a critical design choice for improving OPD.
\section*{Acknowledgments}
This work is supported by National Science Foundation under Award No. IIS-2117902. The views and conclusions are those of the authors and should not be interpreted as representing the official policies of the funding agencies or the government.
This research used the DeltaAI advanced computing and data resource, which is supported by the National Science Foundation (award OAC 2320345) and the State of Illinois. DeltaAI is a joint effort of the University of Illinois Urbana-Champaign and its National Center for Supercomputing Applications.
This work used the DeltaAI system at the National Center for Supercomputing Applications through allocation CIS250246  from the Advanced Cyberinfrastructure Coordination Ecosystem: Services \& Support (ACCESS) program, which is supported by National Science Foundation grants \#2138259, \#2138286, \#2138307, \#2137603, and \#2138296.

\clearpage
\newpage
\bibliographystyle{plain}
\bibliography{ref}

\newpage
\appendix
\part*{Appendix}
\addcontentsline{toc}{part}{Appendix}

\renewcommand{\contentsname}{Appendix Contents}
\etocsetnexttocdepth{subsection}
{\hypersetup{linkcolor=[HTML]{800000}}
\localtableofcontents
}
\newpage
\section{More Experiment Details}

\subsection{Training and Evaluation Details}
\label{app:details_evaluation}
\subsubsection{Training}

\paragraph{General setup.} Training is conducted on
2~GPUs with tensor-parallel rollout size 2, rollout count $n=1$, rollout
temperature $0.8$, and rollout top-$p$ $1.0$. The maximum prompt length is
$2048$ tokens and the maximum training response length is $4096$ tokens. We
use a learning rate of $3\!\times\!10^{-6}$ with weight decay $0.1$. 

\paragraph{Domain-specific settings.} For math reasoning and knowledge QA, we
use a training batch size of $256$ with PPO mini-batch size $128$, training on
the full split of each dataset. For medical reasoning, we use a training batch
size of $128$ with PPO mini-batch size $64$, training on a $5\text{K}$ subset.
A summary is provided in Table~\ref{tab:training-config}.

\begin{table}[h]
\centering
\caption{Training configuration across domains. Settings shared across all
domains are marked with ``--''.}
\label{tab:training-config}
\begin{tabular}{lccc}
\toprule
Setting & Math & Knowledge QA & Medical \\
\midrule
Batch size              & 256 & 256 & 128 \\
PPO mini-batch          & 128 & 128 & 64  \\
Training data           & full split & full split & 5K subset \\
\midrule
Learning rate           & \multicolumn{3}{c}{$3\!\times\!10^{-6}$} \\
Weight decay            & \multicolumn{3}{c}{$0.1$} \\
Train temperature       & \multicolumn{3}{c}{$0.8$} \\
Train top-$p$           & \multicolumn{3}{c}{$1.0$} \\
Train max response len. & \multicolumn{3}{c}{$4096$} \\
\bottomrule
\end{tabular}
\end{table}

\subsubsection{Evaluation}
\label{app:evaluation}

\paragraph{Math reasoning.} We use the prompt template
\begin{quote}
\texttt{\{problem\} Please reason step by step, and put your final answer
within \textbackslash boxed\{\}.}
\end{quote}
Completions are sampled with temperature $0.8$, top-$p$ $0.9$, and
\texttt{max\_tokens}$=16384$. We draw $n=32$ completions per problem, except
on OlympiadBench where $n=8$ due to its larger size. We report accuracy and average generated length.

\paragraph{Knowledge QA.} For zero-shot MCQA on GPQA-Diamond and
ARC-Challenge, we use the prompt template
\begin{quote}\small
\texttt{Please solve the following multiple-choice question. Think step by
step. End your response with a single final answer line in one of these exact
formats: Final answer: C \{\textquotedbl answer\textquotedbl:
\textquotedbl C\textquotedbl\}. Use only the choice letter, and do not add any
text after the final answer line. Question: \{question\} Options:
\{options\}}
\end{quote}
For MMLU-Pro, we use subject-specific few-shot exemplars from the validation
split and prompt the model to end with \texttt{The answer is (X)}.
For ARC-Challenge and MMLU-Pro, we randomly sample $500$ questions from each
test set for evaluation, while GPQA-Diamond is evaluated on its full test set.
Completions are sampled with temperature $0.8$ and top-$p$ $0.9$, with $n=8$
per question. We use \texttt{max\_tokens}$=16384$ for GPQA-Diamond and
$8192$ for ARC-Challenge and MMLU-Pro.

\paragraph{Medical reasoning.} We use the prompt template
\begin{quote}\small
\texttt{Question: \{question\} Options: \{options\} Think step by step.
Please show your choice in the answer field with only the choice letter, e.g.,
\{\textquotedbl answer\textquotedbl: \textquotedbl C\textquotedbl\}.}
\end{quote}
We use deterministic MCQA-style decoding with temperature $0.0$, top-$p$ $1.0$,
\texttt{max\_tokens}$=8192$, and $n=1$. We report accuracy and average
generated length.

\subsection{Implementation Details}
\label{app:details_implementation}
\paragraph{Rollouts and teacher queries.} At each round $t$, training
trajectories are sampled from the proposal policy $\hat q_t$ (one rollout per
prompt). The teacher $\pi^\star$ is never used to generate trajectories at
training time; it is queried only to obtain token-level reference
probabilities, which are used both to construct the student distillation loss
and to materialize the tilted target $q^*_{\eta,t+1}$ for the proposal
update. To keep teacher query costs comparable across methods, both the
distillation KL and the construction of $q^*_{\eta,t+1}$ are restricted
to the teacher top-$k$ support ($k=16$).

\paragraph{Student and proposal updates.} Each round consists of one student
gradient step on the rollout batch using the clipped importance weights
$w_{t,j}(\tau)$ in Eq.~\eqref{eq:clipped_IS-_ratio} with clipping threshold $C = 5$,
followed by $K=1$ inner gradient steps of the proposal update toward the
freshly constructed target $q^*_{\eta,t+1}$. The proposal $\hat q_0$ is initialized from $\pi_0$, and
at each subsequent round $\hat q_{t+1}$ is initialized from the updated
student $\pi_{t+1}$ before its inner updates.

\paragraph{Baseline parity.} All baselines share our rollout backend,
optimizer, chat template, training data, and evaluation infrastructure with
\method; we only vary method-specific objective terms and their coefficients.

\newpage
\section{Additional Experimental Results}
\label{app:additional_experimental_results}

\subsection{A Closer Look at Accuracy and Generation Length}
\label{app:additional_decoding}

The main results show that \method improves accuracy while reducing
generation length. These aggregate improvements leave open where the
reduction in length occurs and how it relates to the accuracy gains.
We therefore examine three successive questions, corresponding to panels
(a)--(c) of Table~\ref{tab:additional_decoding}: whether the gains depend
on the decoding budget, whether they are explained by termination
behavior, and where the length reduction occurs.

Following the main setting of our paper, we investigate distilling
Qwen3-4B-Base (student) from Qwen3-4B-Instruct-2507 (teacher) on math
reasoning, trained on DAPO-Math-17K with training rollouts capped at 4K
tokens. We compare \method with OPD, the standard on-policy baseline it
builds on, and VETO, the strongest baseline in this setting
(Table~\ref{tab:main}), which applies the same geometric mixture as the
distillation target rather than as the rollout policy. We also include the
teacher as a reference for accuracy and correct-response length.
All evaluations follow the protocol in
Appendix~\ref{app:details_evaluation}.

\begin{table}[!htbp]
\centering
\small
\renewcommand{\arraystretch}{1.08}
\caption{\textbf{Decoding budgets and termination behavior.}
Results are macro-averaged over eight math benchmarks. Accuracy and rates
are percentages; K denotes thousands of tokens. In (a), the best and
second-best student accuracies are bolded and underlined, respectively.}
\label{tab:additional_decoding}
\begin{tabular*}{\linewidth}{@{\extracolsep{\fill}}lrrr@{}}
\toprule
\rowcolor{SectionGray}
\multicolumn{4}{c}{\textbf{(a) Accuracy at different decoding budgets}}\\
\midrule
Method & Acc@4K & Acc@8K & Acc@16K\\
\midrule
Teacher & 31.60 & 46.56 & 50.16\\
OPD & 21.89 & 25.32 & 28.46\\
VETO & \underline{24.52} & \underline{28.29} & \underline{31.06}\\
\method & \textbf{26.89} & \textbf{31.42} & \textbf{35.37}\\
\bottomrule
\end{tabular*}

\vspace{4pt}
\begin{tabular*}{\linewidth}{@{\extracolsep{\fill}}lrrrr@{}}
\toprule
\rowcolor{SectionGray}
\multicolumn{5}{c}{\textbf{(b) Termination and answer presence at 16K}}\\
\midrule
Method & Term. rate & Acc.\,$\mid$\,term.
& Ans. presence & First ans. pos.\\
\midrule
OPD & 0.60 & 66.86 & 59.65 & 4.85K\\
VETO & 53.27 & 55.13 & 56.17 & 4.68K\\
\method & 55.09 & 61.25 & 59.18 & 4.79K\\
\bottomrule
\end{tabular*}

\vspace{4pt}
\begin{tabular*}{\linewidth}{@{\extracolsep{\fill}}lrrr@{}}
\toprule
\rowcolor{SectionGray}
\multicolumn{4}{c}{\textbf{(c) No-answer termination and correct-response lengths}}\\
\midrule
Method & No-ans. term.
& \shortstack{\# Tokens to\\first correct ans.}
& \shortstack{\# Tokens after\\first correct ans.}\\
\midrule
Teacher & 0.04 & 3.61K & 0.22K\\
OPD & 0.03 & 3.87K & 12.38K\\
VETO & 0.04 & 3.55K & 0.17K\\
\method & 0.01 & 3.75K & 0.34K\\
\bottomrule
\end{tabular*}
\end{table}

\paragraph{(a) Do the gains depend on the decoding budget?}
Training rollouts are capped at 4K tokens, whereas the main evaluation
allows up to 16K, so part of the gap could reflect how each method
extrapolates beyond its training length. We therefore evaluate all methods
under matched decoding budgets, where Acc@4K, Acc@8K, and Acc@16K denote
accuracy with a maximum generation length of 4K, 8K, and 16K tokens.
\method outperforms OPD and VETO at all three budgets, with gains of 5.00
and 2.37  already present at the training-matched 4K
budget. The improvement therefore cannot be attributed solely to
extrapolation beyond the training length.

\paragraph{(b) Is the gap at 16K explained by termination?}
Since several baselines frequently reach the 16K limit, we next examine
termination behavior at this budget. The termination rate (Term.\ rate)
is the fraction of responses that end before the limit, and
Acc.\,$\mid$\,term.\ is the accuracy among these terminated responses.
Answer presence (Ans.\ presence) is the fraction of responses containing
at least one complete boxed answer, whether or not they terminate, and
first-answer position (First ans.\ pos.) is the average token position of
the first complete boxed answer over answer-containing responses,
regardless of correctness. \method and VETO have similar termination
rates (55.09\% and 53.27\%), yet \method is more accurate both overall
(35.37\% vs.\ 31.06\%) and among terminated responses (61.25\% vs.\
55.13\%), so its advantage over VETO is not explained by terminating more
often. OPD and \method have similar answer presence and first-answer
positions, making it unlikely that OPD's lower accuracy arises from
correct answers appearing later and being truncated. OPD's higher
accuracy conditional on termination is measured on only 0.60\% of its
responses and is therefore unstable.

\paragraph{(c) Where does the length reduction come from?}
With termination ruled out as the main explanation, we ask whether the
shorter generations of \method reflect premature stopping or the removal
of redundant continuation. The no-answer termination rate
(No-ans.\ term.) is the fraction of responses that end before the limit
without a parseable final answer, which would rise if the model stopped
prematurely. For the two length columns, we split each correct response
at the end of its first correct boxed answer and report the average
number of tokens before and after this point. Although \method terminates
on 55.09\% of responses, only 0.01\% end without a parseable answer, and
its tokens to the first correct answer (3.75K) are comparable to those of
OPD (3.87K) and the teacher (3.61K). In contrast, tokens generated after
the first correct answer drop from 12.38K for OPD to 0.34K. On every
benchmark, at least 94.6\% of the correct-response length reduction
relative to OPD occurs after the first correct answer, and 95.78\% of
correct OPD responses repeat the already scored boxed answer. These
results indicate that the shorter generations come from removing
post-answer repetition rather than from truncating useful reasoning.
\FloatBarrier

\subsection{Proposal Approximation and Effective KL Radius}
\label{app:additional_proposal}

Our method uses a learned proposal policy to approximate the tilted target.
We examine this approximation after the proposal update at different training
steps in the same 4B student/teacher setting on math benchmarks. For readability, we omit
prefix and round indices and write $\pi$
for the student, $q^*\propto\pi^{1-\eta}(\pi^\star)^\eta$ for the tilted
target, and $\hat q$ for the learned proposal. We measure both KL
directions between $q^*$ and $\hat q$ on the renormalized teacher top-16
support. We also compute Spearman's $\rho$ across prefixes between
$\mathrm{KL}(q^*\Vert\pi)$ and $\mathrm{KL}(\hat q\Vert\pi)$, using
both the top-16 support and full-vocabulary. This correlation
measures whether the target and proposal induce a similar ordering of
displacement from the student across prefixes.

\begin{table}[!htbp]
\centering
\small
\renewcommand{\arraystretch}{1.10}
\caption{\textbf{Proposal approximation and effective KL radius.}
(a) Mean approximation KLs on the teacher top-16 support and rank
correlations of displacement from the student.
(b) Percentiles of $\mathrm{KL}(q^*\Vert\pi)$ on the full vocabulary.}
\label{tab:additional_proposal}
\begin{tabular*}{\linewidth}{@{\extracolsep{\fill}}rrrrr@{}}
\toprule
\rowcolor{SectionGray}
\multicolumn{5}{c}{\textbf{(a) Approximation to the tilted target}}\\
\midrule
Step & $\mathrm{KL}(q^*\Vert\hat q)$ & $\mathrm{KL}(\hat q\Vert q^*)$
& \shortstack{$\rho$\\top-16} & \shortstack{$\rho$\\full vocab.}\\
\midrule
20 & 0.012299 & 0.012704 & 0.880 & 0.704\\
40 & 0.009787 & 0.009905 & 0.862 & 0.746\\
60 & 0.009007 & 0.009122 & 0.862 & 0.820\\
80 & 0.009032 & 0.009111 & 0.888 & 0.763\\
100 & 0.008369 & 0.008462 & 0.882 & 0.789\\
120 & 0.008931 & 0.009011 & 0.888 & 0.766\\
134 & 0.008291 & 0.008411 & 0.894 & 0.710\\
\bottomrule
\end{tabular*}

\vspace{5pt}
\begin{tabular*}{\linewidth}{@{\extracolsep{\fill}}rrrr@{}}
\toprule
\rowcolor{SectionGray}
\multicolumn{4}{c}{\textbf{(b) Effective radius of the tilted target}}\\
\midrule
Step & Median & 70th percentile & 80th percentile\\
\midrule
20 & $6.07\times10^{-7}$ & $4.04\times10^{-4}$ & 0.004136\\
60 & $1.21\times10^{-6}$ & 0.001440 & 0.005581\\
100 & $3.10\times10^{-7}$ & $8.19\times10^{-4}$ & 0.005517\\
134 & $1.33\times10^{-7}$ & $1.83\times10^{-4}$ & 0.008065\\
\bottomrule
\end{tabular*}
\end{table}

Table~\ref{tab:additional_proposal}(a) shows that both approximation KLs
remain small across training, and at the final step the full-vocabulary
KLs are also below 0.01, close to the top-16 values.
The displacement correlations are consistently positive at all measured
steps, showing that $\hat q$ tends to move farther from the student where
$q^*$ requests a larger shift, although they do not imply equal
displacement magnitudes. Overall, the small KLs and stable rank
correlations show that $\hat q$ closely follows both the tilted target and
its relative variation across prefixes, including beyond the truncated
top-16 support.
\method samples from the learned $\hat q$ as an efficient proxy, whereas
the exact geometric-mixture and trust-region interpretation applies to the
ideal target $q^*$.

\paragraph{Effective KL radius.}
For a fixed KL radius, Proposition~\ref{prop:geometric-mixture} yields a
prefix-dependent $\eta$. Solving the constrained problem separately at
every prefix would add online computation, so our implementation instead
treats $\eta$ as a global tunable mixing coefficient. Because the student
and teacher distributions vary across prefixes, a global $\eta$ induces a
non-uniform effective KL radius $\mathrm{KL}(q^*\Vert\pi)$. To quantify
this effect, Table~\ref{tab:additional_proposal}(b) reports its median,
70th, and 80th percentiles on the full vocabulary at four training steps.
Across training, the distribution remains concentrated near zero, with
most prefixes receiving only a small shift and a smaller right tail
receiving larger shifts. Thus, although a fixed $\eta$ induces a
non-uniform, prefix-dependent radius, it does not produce a large
deviation from the student for most prefixes, preserving student
anchoring in practice.
\FloatBarrier

\subsection{Scaling to a Larger Teacher and Student}
\label{app:additional_scaling}

To examine whether the improvements extend to larger model scales, we use
Qwen3-30B-A3B-Instruct-2507 as the teacher and Qwen3-4B-Base or
Qwen3-8B-Base as the student. All methods are trained for two epochs with
teacher top-16 support. The evaluation protocol is unchanged.

\begin{figure}[!htbp]
\centering
\includegraphics[width=\linewidth]{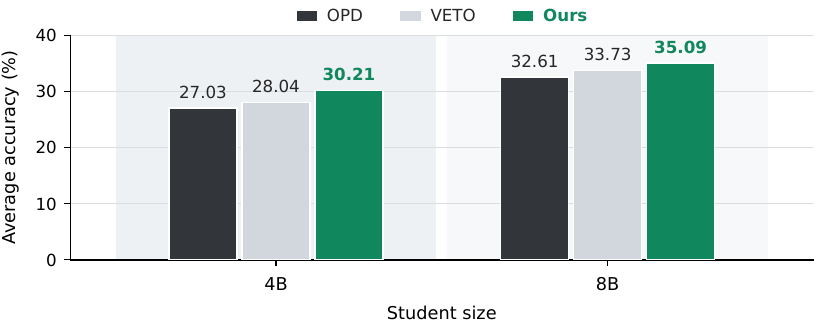}
\caption{\textbf{Distillation with a larger teacher and student.}
Bars show macro-average accuracy (\%) over eight math benchmarks after
two training epochs with Qwen3-30B-A3B-Instruct-2507 as the teacher.
``Ours'' denotes \method.}
\label{fig:additional_scaling}
\end{figure}

Fig.~\ref{fig:additional_scaling} shows that \method achieves the best
average accuracy for both students, improving over OPD by 3.18 and 2.48
points at 4B and 8B, respectively, and over VETO by 2.17 and 1.36 points.
These results indicate that \method remains effective with a larger
teacher and at a larger student scale.
\FloatBarrier

\clearpage
\subsection{Impact of the Teacher Support Size}
\label{app:additional_topk}

We next examine the teacher top-$k$ support used for the student
distillation loss and the construction of the tilted target.
We vary $k\in\{4,16,32,128\}$ in the original 4B student/teacher
setting on math reasoning, with $k=16$ as the default.

\begin{table}[!htbp]
\centering
\footnotesize
\setlength{\tabcolsep}{3.2pt}
\renewcommand{\arraystretch}{1.10}
\caption{\textbf{Impact of the teacher support size.}
Scores are accuracy (\%); Avg.\ is the macro-average over eight math
benchmarks. Best and second-best accuracies are bolded and underlined,
respectively.}
\label{tab:additional_topk}
\begin{tabular*}{\linewidth}{@{\extracolsep{\fill}}rrrrrrrrrr@{}}
\toprule
$k$ & AIME24 & AIME25 & AIME26 & AMC23 & H25F & H25N & Minerva & Olymp. & Avg.\\
\midrule
4 & 27.19 & 27.60 & 23.75 & 66.95 & 14.69 & 15.21 & 42.35 & 55.36 & 34.14\\
16 & 28.65 & \textbf{28.96} & \textbf{25.00} & 69.61 & \textbf{15.83}
& \underline{15.42} & 42.65 & 56.82 & \underline{35.37}\\
32 & \textbf{30.00} & \underline{28.65} & \underline{24.90} & \underline{69.84}
& 14.79 & 14.90 & \underline{42.68} & \underline{56.88} & 35.33\\
128 & \underline{29.48} & \textbf{28.96} & 24.58 & \textbf{70.39}
& \underline{15.62} & \textbf{15.94} & \textbf{42.70} & \textbf{57.60} & \textbf{35.66}\\
\bottomrule
\end{tabular*}
\end{table}

Table~\ref{tab:additional_topk} shows that reducing $k$ from 16 to 4
lowers average accuracy by 1.23. Increasing the support
beyond 16 yields limited changes: $k=16$, 32, and 128 achieve 35.37,
35.33, and 35.66, respectively, a range of 0.33. The gains
are not monotonic across support sizes. These results support top-16
as a practical choice in this setting.
\FloatBarrier

\clearpage
\section{Discussion}
\label{app:discussion}
\paragraph{Why use a learned tilted rollout proposal?}
A possible alternative is to sample from the student \(\pi_t\) and reweight
the resulting trajectories toward the tilted target \(q_{\eta,t}^*\).
However, reweighting only changes the contribution of trajectories that have
already been sampled; it does not change which prefixes and tokens are visited
during rollout generation. If completions with large teacher--student
log-ratio have low probability under \(\pi_t\), they may rarely appear in
student rollouts, and importance weights cannot recover trajectories that were
not collected. The learned proposal \(\hat q_t\) instead changes the rollout
distribution directly by generating trajectories from an approximation to the
tilted target. The clipped weights in Eq.~\eqref{eq:student-update} are used only
to keep the student update aligned with the OPD objective when training on
proposal-generated rollouts.

\paragraph{Implementation considerations.}
We initialize \(\hat q_0 \leftarrow \pi_0\), and at each subsequent round
initialize \(\hat q_{t+1}\) from the updated student \(\pi_{t+1}\) before
training it toward the next tilted target \(q^*_{\eta,t+1}\). This warm-start
keeps the proposal close to the current student at the beginning of each round,
so the proposal update primarily needs to absorb the teacher-induced tilt rather
than re-learn the student distribution from scratch.

Beyond the fixed-\(\eta\) scheme used in our experiments, it is natural to ask
whether \(\eta\) could instead be adapted across rounds, which would correspond
to an implicit curriculum over rollout distributions. One conceivable design is
to set \(\eta\) so that the local KL between \(q^*_{\eta,t+1}\) and
\(\pi_{t+1}\) matches a target radius, and to grow this radius only when the
learned proposal tracks the tilted target stably---plausible diagnostics for
such stability include the proposal-training loss, the proposal--student KL,
and the clipping rate of the importance weights. Conceptually, this would tie
the amount of teacher-guided deviation to the current training state rather
than to a fixed schedule. We flag this as a possible extension rather than a
component of our method, since our reported results use a fixed \(\eta\).

A separate direction concerns the parameterization of \(\hat q_t\) itself.
Rather than maintaining a full copy of the student, one could parameterize
\(\hat q_t\) with a lightweight adapter such as LoRA~\cite{hu2022lora} on top
of the frozen student backbone, in which case the proposal overhead would
reduce to storing and updating a small set of adapter parameters. We expect
this to be largely orthogonal to the rollout-policy design studied here, and
leave a careful empirical comparison to future work.

\paragraph{Limitations.}
Our study has several limitations. The learnable proposal policy \(\hat q_t\)
introduces modest training-time overhead, which may motivate lighter
parameterizations such as LoRA-based proposals at larger student scales. We
also treat the tilt strength \(\eta\) as a fixed scalar; adaptive schedules
that tie \(\eta\) to training-state signals are a natural extension we do not
pursue here. The clipped importance weights \(w_{t,j}(\tau)\)
also introduce a fixed clipping threshold \(C\), and a more principled
treatment of the proposal--student gap could further tighten the alignment
between rollout sampling and the OPD objective. Finally, our empirical study
focuses on the Qwen3 family, with main evaluations at the 1.7B and 4B scales
across math, knowledge QA, and medical reasoning, and additional math
experiments extending to an 8B student in
Appendix~\ref{app:additional_scaling}. We leave extensions to cross-family
teacher--student pairs, still larger scales, and additional modalities to
future work.
\newpage
\section{Declaration of LLM Usage and Broader Impact}
\label{appendix_sec_LLM_usage_broader_impacts}

\textbf{Declaration of LLM Usage.}
This work studies large language model distillation, where LLMs are central
experimental subjects throughout the paper: they are used as teacher and student
policies, sampled to generate reasoning trajectories, queried for token-level
distillation supervision, and evaluated on mathematical, knowledge-intensive,
and medical reasoning benchmarks. LLMs were also used as editing tools to
improve manuscript clarity, grammar, and presentation, without replacing the
authors' role in method design, analysis, or experimental interpretation.

\textbf{Broader Impact.}
This paper introduces \method{}, a rollout-policy design framework for improving
on-policy distillation of reasoning-oriented language models. By guiding
student rollouts toward higher-teacher support trajectories while retaining the
scalability of standalone generation, our method can improve the effectiveness
and budget efficiency of distilling reasoning capabilities into smaller models.
This may make capable reasoning systems more accessible in domains such as
mathematics, knowledge-intensive question answering, and medical reasoning
benchmarks. At the same time, stronger distilled reasoning models may inherit
general risks associated with LLM deployment, including over-reliance on
automated reasoning, hallucinated or incorrect answers, and potential misuse in
settings where generated outputs are treated as authoritative. Our work does not
introduce a new high-risk application setting, collect human-subject data, or
release a model intended for direct deployment; rather, it proposes a training
method for existing reasoning-model distillation pipelines. We therefore view
its broader societal risks as largely aligned with those of existing LLM
reasoning and distillation systems, and encourage downstream users to apply
appropriate verification, monitoring, and domain-specific safeguards before
deployment in high-stakes settings.
\newpage
\section{Missing Proofs}
\label{app:proofs}

\subsection{Proof of Proposition~\ref{prop:geometric-mixture}}
\geomixture*
\label{app:proofs_geometric-mixture}
\begin{proof}
Fix a prefix $(x,y_{<j})$ and write
\[
    p(v)=\pi_t(v\mid x,y_{<j}),
    \qquad
    s(v)=s_t(v,x,y_{<j})
\]
for brevity. The local trust-region problem in Eq.~\eqref{eq:local-trust-region} is
\[
    \max_q \sum_{v\in\mathcal{V}} q(v)s(v)
    \quad
    \mathrm{s.t.}
    \quad
    \mathrm{KL}(q\|p)\leq \delta,
    \qquad
    \sum_{v\in\mathcal{V}}q(v)=1 .
\]
We consider the nondegenerate case in which the KL constraint is active; the degenerate cases are recovered by $\eta=0$ or by a constant score on the student support. The Lagrangian is
\[
    \mathcal{L}(q,\lambda,\nu)
    =
    \sum_v q(v)s(v)
    -
    \lambda
    \left(
        \sum_v q(v)\log\frac{q(v)}{p(v)}
        -
        \delta
    \right)
    -
    \nu
    \left(
        \sum_v q(v)-1
    \right),
\]
where $\lambda>0$ is the multiplier for the KL constraint and $\nu$ enforces normalization. Taking the derivative with respect to $q(v)$ and setting it to zero gives
\[
    s(v)
    -
    \lambda
    \left(
        \log\frac{q(v)}{p(v)}+1
    \right)
    -
    \nu
    =
    0 .
\]
Therefore,
\[
    q(v)
    =
    p(v)
    \exp\left(\frac{s(v)}{\lambda}\right)
    \exp\left(-1-\frac{\nu}{\lambda}\right).
\]
The last factor is independent of $v$ and is absorbed into the normalization constant. Letting $\eta=1/\lambda$, we obtain
\[
    q_\eta^*(v)
    =
    \frac{
        p(v)\exp(\eta s(v))
    }{
        \sum_{v'\in\mathcal{V}}p(v')\exp(\eta s(v'))
    } .
\]
Returning to the original notation yields
\[
    q_\eta^*(v\mid x,y_{<j})
    =
    \frac{
        \pi_t(v\mid x,y_{<j})
        \exp\left(\eta s_t(v,x,y_{<j})\right)
    }{
        Z_\eta(x,y_{<j})
    },
\]
where
\[
    Z_\eta(x,y_{<j})
    =
    \sum_{v'\in\mathcal{V}}
    \pi_t(v'\mid x,y_{<j})
    \exp\left(\eta s_t(v',x,y_{<j})\right).
\]
Substituting
\[
    s_t(v,x,y_{<j})
    =
    \log \pi^\star(v\mid x,y_{<j})
    -
    \log \pi_t(v\mid x,y_{<j})
\]
gives
\[
    \pi_t(v\mid x,y_{<j})
    \exp\left(\eta s_t(v,x,y_{<j})\right)
    =
    \pi_t(v\mid x,y_{<j})^{1-\eta}
    \pi^\star(v\mid x,y_{<j})^\eta .
\]
Hence
\[
    q_\eta^*(v\mid x,y_{<j})
    =
    \frac{
        \pi_t(v\mid x,y_{<j})^{1-\eta}
        \pi^\star(v\mid x,y_{<j})^\eta
    }{
        \sum_{v'\in\mathcal{V}}
        \pi_t(v'\mid x,y_{<j})^{1-\eta}
        \pi^\star(v'\mid x,y_{<j})^\eta
    } .
\]
When the constraint is active, the multiplier $\eta$ is chosen so that 
$\mathrm{KL}(q_\eta^*\|p)=\delta$.
\end{proof}

\subsection{Proof of Corollary~\ref{cor:local-improvement}}
\label{app:proofs_local-improvement}
\localimprovement*

\begin{proof}
Fix a prefix $(x,y_{<j})$ and write
\[
    p(v)=\pi_t(v\mid x,y_{<j}),
    \qquad
    s(v)=s_t(v,x,y_{<j})
\]
for brevity. Let
\[
    \mu=\mathbb{E}_{v\sim p}[s(v)],
    \qquad
    \sigma^2=\mathrm{Var}_{v\sim p}[s(v)] .
\]
By Proposition~\ref{prop:geometric-mixture}, the optimal local tilted distribution can be written as
\[
    q_\eta^*(v)
    =
    \frac{
        p(v)\exp(\eta s(v))
    }{
        Z(\eta)
    },
    \qquad
    Z(\eta)
    =
    \mathbb{E}_{v\sim p}\left[\exp(\eta s(v))\right].
\]
As $\delta\to 0$, the corresponding multiplier satisfies $\eta\to 0$. Define the centered cumulant generating function
\[
    \Lambda(\eta)
    =
    \log
    \mathbb{E}_{v\sim p}
    \left[
        \exp\left(\eta(s(v)-\mu)\right)
    \right].
\]
Then
\[
    \log Z(\eta)
    =
    \eta\mu+\Lambda(\eta),
    \qquad
    S_j(q_\eta^*)
    =
    \frac{d}{d\eta}\log Z(\eta)
    =
    \mu+\Lambda'(\eta).
\]
Since the vocabulary is finite, $\Lambda$ is smooth in a neighborhood of $\eta=0$. Therefore,
\[
    \Lambda(\eta)
    =
    \frac{\sigma^2}{2}\eta^2
    +
    O(\eta^3),
    \qquad
    \Lambda'(\eta)
    =
    \sigma^2\eta
    +
    O(\eta^2).
\]
It follows that
\[
    S_j(q_\eta^*)-S_j(\pi_t)
    =
    S_j(q_\eta^*)-\mu
    =
    \sigma^2\eta
    +
    O(\eta^2).
\]

It remains to relate $\eta$ to the KL radius $\delta$. For the exponential tilt,
\[
    \mathrm{KL}(q_\eta^*\|p)
    =
    \mathbb{E}_{q_\eta^*}
    \left[
        \log\frac{q_\eta^*(v)}{p(v)}
    \right]
    =
    \eta S_j(q_\eta^*)-\log Z(\eta).
\]
Using the identities above,
\[
    \mathrm{KL}(q_\eta^*\|p)
    =
    \eta\bigl(\mu+\Lambda'(\eta)\bigr)
    -
    \bigl(\eta\mu+\Lambda(\eta)\bigr)
    =
    \eta\Lambda'(\eta)-\Lambda(\eta).
\]
Substituting the Taylor expansions gives
\[
    \delta
    =
    \mathrm{KL}(q_\eta^*\|p)
    =
    \frac{\sigma^2}{2}\eta^2
    +
    O(\eta^3).
\]
Thus, when $\sigma>0$,
\[
    \eta
    =
    \frac{\sqrt{2\delta}}{\sigma}
    +
    O(\delta).
\]
Substituting this relation into the improvement expansion yields
\[
    S_j(q_\eta^*)-S_j(\pi_t)
    =
    \sigma^2
    \left(
        \frac{\sqrt{2\delta}}{\sigma}
        +
        O(\delta)
    \right)
    +
    O(\delta)
    =
    \sigma\sqrt{2\delta}
    +
    O(\delta).
\]
Returning to the original notation, $\sigma=\sigma_j(x,y_{<j})$, we obtain
\[
    S_j(q_\eta^*)-S_j(\pi_t)
    =
    \sigma_j(x,y_{<j})\sqrt{2\delta}
    +
    o(\sqrt{\delta}).
\]
If $\sigma=0$, then the score is constant on the support of $\pi_t(\cdot\mid x,y_{<j})$, so any finite-KL reweighting has the same expected score. The first-order improvement is therefore zero, which is consistent with the stated expansion.
\end{proof}

\newpage

\section{Optimization Analysis under Structured Teacher Target Noise}
\label{app:structured-teacher-target-noise}

We analyze a stochastic teacher--student setting. At round \(t\), the current
student and proposal policies may be random through the training history; in the
one-step analysis below, we condition on this history and suppress the
conditioning notation. A token-level state is denoted by
\(c=(x,j,y_{<j})\). For any rollout policy \(q\), let
\[
M_q
:=
\mathbb{E}_{x\sim\mathcal D,\tau\sim q(\cdot\mid x)}
\left[|\tau|\right]
\in(0,\infty).
\]
Define the normalized token-level occupancy distribution \(d^q\) by
\[
\mathbb{E}_{c\sim d^q}[f(c)]
:=
\frac{1}{M_q}
\mathbb{E}_{x\sim\mathcal D,\tau\sim q(\cdot\mid x)}
\left[
\sum_{j=1}^{|\tau|}
f(x,j,y_{<j})
\right].
\]

We introduce an ideal clean teacher target
\(\pi_\circ^\star(\cdot\mid c)\) only for analysis. The teacher target actually
used for distillation is a random conditional distribution
\(\widetilde \pi^\star(\cdot\mid c)\), which models stochastic or uncertain
teacher supervision around the clean target.

To relate teacher reachability to supervision reliability, let \(r_T(c)\ge 0\)
denote a reachability-risk score, with larger values assigned to prefixes that
are less reachable under the teacher. For example, \(r_T(c)\) can be chosen as a
nondecreasing function of the teacher prefix negative log-likelihood,
\[
    m_T(c)
    =
    -\frac{1}{\max\{1,j-1\}}
    \sum_{k=1}^{j-1}
    \log \pi^\star(y_k\mid x,y_{<k}),
    \qquad
    r_T(c)=\phi(m_T(c)),
\]
where \(\phi\) is nondecreasing.

For a state \(c\), define the clean and observed teacher losses as
\[
\ell_c^\circ(\theta)
=
\mathrm{KL}
\left(
\pi_\theta(\cdot\mid c)
\,\|\,
\pi_\circ^\star(\cdot\mid c)
\right),
\qquad
\widetilde \ell_c(\theta)
=
\mathrm{KL}
\left(
\pi_\theta(\cdot\mid c)
\,\|\,
\widetilde \pi^\star(\cdot\mid c)
\right).
\]
The average per-token reachability-risk exposure of a rollout policy \(q\) is
\[
    R_t(q)=\mathbb{E}_{c\sim d^q}[r_T(c)] .
\]

\begin{assumption}[Reachability-structured teacher target noise]
\label{assump:structured-teacher-target-noise}
For each token-level state \(c\), define
\[
\zeta_t(c)
=
\nabla_\theta \widetilde \ell_c(\theta_t)
-
\nabla_\theta \ell_c^\circ(\theta_t).
\]
The perturbation is conditionally unbiased relative to the clean teacher target,
and its conditional second moment is controlled by reachability risk:
\[
\mathbb{E}[\zeta_t(c)\mid c]=0,
\qquad
\mathbb{E}[\|\zeta_t(c)\|^2\mid c]
\le
\lambda r_T(c)
\]
for some \(\lambda>0\).
\end{assumption}

This is an analysis-only noise model: teacher supervision is unbiased relative
to \(\pi_\circ^\star\), but its gradient perturbation is allowed to be larger on
lower-reachability states as measured by \(r_T(c)\).

For the clean target, define the normalized per-token objective
\(F^\circ_{t,q}(\theta)=\mathbb{E}_{c\sim d^q}[\ell_c^\circ(\theta)]\), and let
\[
G_t(q)
=
\mathbb{E}_{c\sim d^q}
\left[
\|\nabla_\theta \ell_c^\circ(\theta_t)\|^2
\right].
\]

\begin{proposition}[One-step descent under reachability-structured teacher target noise]
\label{prop:structured-target-noise-descent}
Assume that \(F^\circ_{t,q}\) is \(L\)-smooth in \(\theta\) for a fixed occupancy
\(d^q\). Consider one stochastic update using the observed teacher target:
\[
\theta_{t+1}
=
\theta_t
-
\alpha
\nabla_\theta \widetilde \ell_c(\theta_t),
\qquad
c\sim d^q .
\]
Under Assumption~\ref{assump:structured-teacher-target-noise},
\[
\mathbb{E}
\left[
F^\circ_{t,q}(\theta_t)-F^\circ_{t,q}(\theta_{t+1})
\right]
\ge
\alpha
\|\nabla_\theta F^\circ_{t,q}(\theta_t)\|^2
-
\frac{L\alpha^2}{2}
\left(
G_t(q)+\lambda R_t(q)
\right),
\]
where the expectation is over \(c\sim d^q\) and the stochastic teacher target.
\end{proposition}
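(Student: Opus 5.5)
The argument is the standard one-step descent inequality for $L$-smooth objectives under unbiased stochastic gradients. The variance term is split into a clean-gradient part and a noise part; the noise part is then controlled pointwise by Assumption~\ref{assump:structured-teacher-target-noise}. Write $g=\nabla_\theta\widetilde\ell_c(\theta_t)=\nabla_\theta\ell_c^\circ(\theta_t)+\zeta_t(c)$ with $c\sim d^q$. By $L$-smoothness of $F^\circ_{t,q}$ at the update $\theta_{t+1}=\theta_t-\alpha g$,
\[
F^\circ_{t,q}(\theta_{t+1})\le F^\circ_{t,q}(\theta_t)-\alpha\langle\nabla_\theta F^\circ_{t,q}(\theta_t),g\rangle+\frac{L\alpha^2}{2}\|g\|^2 .
\]
Rearranging and taking expectations over $c$ and the stochastic teacher target reduces the claim to two facts: a first-moment identity for $g$ and a second-moment bound for $g$.

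For the first moment, I will condition on $c$. Since $\mathbb{E}[\zeta_t(c)\mid c]=0$, we get $\mathbb{E}[g\mid c]=\nabla_\theta\ell_c^\circ(\theta_t)$. Averaging over $c\sim d^q$ then gives $\mathbb{E}[g]=\mathbb{E}_{c\sim d^q}[\nabla_\theta\ell_c^\circ(\theta_t)]$. This equals $\nabla_\theta F^\circ_{t,q}(\theta_t)$ by interchanging gradient and expectation, which is legitimate because the occupancy $d^q$ is held fixed and does not depend on $\theta$. The cross term therefore has expectation exactly $\alpha\|\nabla_\theta F^\circ_{t,q}(\theta_t)\|^2$.

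For the second moment, I will expand $\|g\|^2=\|\nabla_\theta\ell_c^\circ\|^2+2\langle\nabla_\theta\ell_c^\circ,\zeta_t(c)\rangle+\|\zeta_t(c)\|^2$ and condition on $c$. Because $\nabla_\theta\ell_c^\circ(\theta_t)$ is deterministic given $c$ and the history, the mixed term vanishes by conditional unbiasedness. The noise term is bounded by $\lambda r_T(c)$. Averaging over $d^q$ yields $\mathbb{E}\|g\|^2\le G_t(q)+\lambda R_t(q)$. Substituting both facts into the smoothness inequality gives the stated bound.

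The main point needing care is the conditioning structure. The clean gradient and $r_T(c)$ must be measurable with respect to $c$ and the training history, with only the teacher target $\widetilde\pi^\star$ random given $c$, so that the tower property applies cleanly. I will make this explicit by noting that the analysis conditions on the history, as stated in the setup. I will also assume the integrability needed for the interchange of gradient and expectation, which is automatic when the state space is finite or the gradients are bounded.
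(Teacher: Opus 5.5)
Your proposal is correct and follows essentially the same route as the paper: apply the descent lemma, use conditional unbiasedness to turn the cross term into $\alpha\|\nabla_\theta F^\circ_{t,q}(\theta_t)\|^2$, then expand the second moment, where the mixed term vanishes and the noise term is bounded by $\lambda r_T(c)$. Your remarks on the conditioning structure and on interchanging gradient and expectation (valid because $d^q$ is fixed) make explicit what the paper's proof uses implicitly.
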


\begin{proof}
Let
\[
    g_c^\circ=\nabla_\theta \ell_c^\circ(\theta_t),
    \qquad
    \widetilde g_c
    =
    \nabla_\theta \widetilde \ell_c(\theta_t)
    =
    g_c^\circ+\zeta_t(c).
\]
By \(L\)-smoothness,
\[
F^\circ_{t,q}(\theta_{t+1})
\le
F^\circ_{t,q}(\theta_t)
-
\alpha
\left\langle
\nabla_\theta F^\circ_{t,q}(\theta_t),
\widetilde g_c
\right\rangle
+
\frac{L\alpha^2}{2}
\|\widetilde g_c\|^2 .
\]
Taking expectation over \(c\sim d^q\) and the observed-target randomness, the
conditional zero-mean property gives
\[
    \mathbb{E}[\widetilde g_c]
    =
    \mathbb{E}_{c\sim d^q}[g_c^\circ]
    =
    \nabla_\theta F^\circ_{t,q}(\theta_t).
\]
Therefore the first-order term becomes
\[
    \mathbb{E}
    \left[
    \left\langle
    \nabla_\theta F^\circ_{t,q}(\theta_t),
    \widetilde g_c
    \right\rangle
    \right]
    =
    \|\nabla_\theta F^\circ_{t,q}(\theta_t)\|^2 .
\]
For the second-order term,
\[
\mathbb{E}\|\widetilde g_c\|^2
=
\mathbb{E}\|g_c^\circ+\zeta_t(c)\|^2 .
\]
The cross term vanishes because
\(\mathbb{E}[\zeta_t(c)\mid c]=0\), and Assumption~\ref{assump:structured-teacher-target-noise}
implies
\[
\mathbb{E}\|\widetilde g_c\|^2
\le
\mathbb{E}_{c\sim d^q}\|g_c^\circ\|^2
+
\lambda \mathbb{E}_{c\sim d^q}[r_T(c)]
=
G_t(q)+\lambda R_t(q).
\]
Substituting these two bounds into the smoothness inequality proves the claim.
\end{proof}

\paragraph{Comparison with on-policy sampling.}
Proposition~\ref{prop:structured-target-noise-descent} assigns each rollout
policy \(q\) the proposal-induced one-step lower bound
\[
\mathcal B_t(q)
=
\alpha
\|\nabla_\theta F^\circ_{t,q}(\theta_t)\|^2
-
\frac{L\alpha^2}{2}
\left(
G_t(q)+\lambda R_t(q)
\right).
\]
This is not a guarantee on a fixed evaluation objective. It compares the descent
bound for the clean objective induced by each rollout occupancy.

To compare the ideal tilted rollout policy \(q^*_{\eta,t}\) with standard
on-policy sampling \(\pi_t\), define
\[
A_t
=
\|\nabla_\theta F^\circ_{t,q^*_{\eta,t}}(\theta_t)\|^2
-
\|\nabla_\theta F^\circ_{t,\pi_t}(\theta_t)\|^2,
\qquad
V_t
=
G_t(q^*_{\eta,t})-G_t(\pi_t),
\qquad
D_t
=
R_t(\pi_t)-R_t(q^*_{\eta,t}).
\]
Then
\[
\mathcal B_t(q^*_{\eta,t})
-
\mathcal B_t(\pi_t)
=
\alpha A_t
-
\frac{L\alpha^2}{2}V_t
+
\frac{L\alpha^2\lambda}{2}D_t .
\]
Therefore, the tilted rollout policy has a no-smaller proposal-induced one-step
lower bound than on-policy sampling whenever
\[
A_t+\frac{\alpha L\lambda}{2}D_t
\ge
\frac{\alpha L}{2}V_t .
\]

This decomposition separates three effects. The term \(A_t\) captures the change
in clean descent signal under the rollout-induced state occupancy. The term
\(V_t\) captures the change in clean-gradient second moment. The term \(D_t\)
captures the reduction in reachability-risk exposure. Thus tilted sampling
improves this proposal-induced lower bound when it increases the clean descent
signal and/or reduces reachability-risk exposure without inducing too large a
second-moment penalty.

\paragraph{When does tilting reduce reachability-risk exposure?}
The comparison above treats
\(D_t=R_t(\pi_t)-R_t(q^*_{\eta,t})\) as a property of the induced rollout
distribution. The geometric tilt does not automatically guarantee \(D_t>0\),
because it is defined through relative teacher support rather than directly
through the reachability-risk score. A local diagnostic can be expressed using a
covariance condition.

For a prefix state \(c\), let
\[
    s_t(v,c)
    =
    \log \pi^\star(v\mid c)-\log \pi_t(v\mid c)
\]
be the teacher--student log-ratio score used by the tilted rollout policy, and let
\(\rho_T(v,c)\) denote the reachability-risk score of the next prefix obtained
by appending token \(v\). For
\(q_\eta(v\mid c)\propto \pi_t(v\mid c)\exp(\eta s_t(v,c))\),
\[
\left.
\frac{d}{d\eta}
\mathbb{E}_{v\sim q_\eta(\cdot\mid c)}
\left[
\rho_T(v,c)
\right]
\right|_{\eta=0}
=
\mathrm{Cov}_{v\sim\pi_t(\cdot\mid c)}
\left(
\rho_T(v,c),
s_t(v,c)
\right).
\]
Therefore, if tokens with larger relative teacher support tend to lead to lower
reachability-risk next prefixes, the covariance is negative and a small
geometric tilt locally reduces reachability-risk exposure. This condition links
the risk-reduction term \(D_t\) in the optimization bound to the proposed
tilted rollout policy.

\newpage
\section{Geometric and Arithmetic Interpolation}
\label{app:geometric-arithmetic-interpolation}

\paragraph{Interpolation paths.}
A natural and commonly used way to interpolate between two policies is
the arithmetic mixture $(1-\alpha)\pi_t + \alpha \pi^\star$~\cite{gu2024minillm,ross2011reduction}. This appendix
contrasts it with the geometric interpolation used in our rollout
policy and clarifies which proposal-design objective each path
corresponds to.

For a fixed prefix \(c=(x,y_{<j})\), write
\[
    p(v)=\pi_t(v\mid c),
    \qquad
    p^\star(v)=\pi^\star(v\mid c)
\]
for the student and teacher conditionals. The geometric interpolation used in
our method is
\[
q_{\eta}^{\mathrm{geo}}(v\mid c)
=
\frac{p(v)^{1-\eta}p^\star(v)^\eta}
{\sum_{v'}p(v')^{1-\eta}p^\star(v')^\eta}.
\]
A natural alternative is the arithmetic interpolation
\[
q_{\alpha}^{\mathrm{arith}}(v\mid c)
=
(1-\alpha)p(v)+\alpha p^\star(v).
\]
Both paths interpolate between the student and teacher endpoints, but they arise
from different information-geometric objectives.
\paragraph{Different variational objectives.}
The geometric path is the barycenter under the reverse-KL geometry:
\[
q_{\eta}^{\mathrm{geo}}
=
\arg\min_q
(1-\eta)\mathrm{KL}(q\|p)
+
\eta\mathrm{KL}(q\|p^\star).
\]
Thus it can be viewed as a product-of-experts~\cite{hinton2002training} interpolation: probability mass is
assigned according to the joint support of the student and teacher conditionals.
This geometry matches the local trust-region formulation used in our rollout policy.
Indeed, with the teacher--student log-ratio
\[
    s_t(v,c)=\log p^\star(v)-\log p(v),
\]
the same distribution can be written as
\[
    q_{\eta}^{\mathrm{geo}}(v\mid c)
    \propto
    p(v)\exp(\eta s_t(v,c)).
\]
Therefore, the geometric rollout policy is the KL-regularized mirror-descent update
that starts from the student conditional and reweights candidate tokens by
relative teacher support.

The arithmetic path corresponds to a different variational problem:
\[
q_{\alpha}^{\mathrm{arith}}
=
\arg\min_q
(1-\alpha)\mathrm{KL}(p\|q)
+
\alpha\mathrm{KL}(p^\star\|q).
\]
This is a mixture-of-experts interpolation: the proposal combines the two
conditionals additively. Such a path is natural when the objective is coverage.
For example, since
\[
    q_{\alpha}^{\mathrm{arith}}(v\mid c)\ge (1-\alpha)p(v),
\]
the importance ratio
\(p(v)/q_{\alpha}^{\mathrm{arith}}(v\mid c)\) is uniformly bounded by
\(1/(1-\alpha)\). This property is useful in settings where the proposal must
cover the student distribution, such as defensive importance sampling or
off-policy evaluation.
\paragraph{Implication for OPD.}
The proposal-design problem in OPD is different. We seek a rollout policy that
remains anchored to the current student while shifting probability toward
teacher supported completions. This is formalized in the main text through a
local \(\mathrm{KL}(q\|\pi_t)\) trust region and the relative teacher support
score \(s_t\). Under this formulation, the geometric path is not merely an
interpolation heuristic; it is the variational solution induced by the chosen
reverse-KL geometry. In contrast, the arithmetic path corresponds to a different
notion of closeness and a different proposal objective, one that emphasizes
additive coverage of the student--teacher mixture.

This distinction is also consistent with the reachability-structured noise
analysis. That analysis does not require a particular interpolation path; it
states that a rollout policy is favorable when its induced occupancy reduces
reachability-risk exposure without incurring too large a second-moment penalty.
The geometric rollout policy is aligned with this analysis because it changes the
student rollout distribution through multiplicative reweighting by relative
teacher support. Whether this reweighting reduces reachability-risk exposure
depends on the covariance condition discussed earlier, not on an unconditional
dominance claim over other interpolation paths.

In summary, arithmetic interpolation is a principled choice for
coverage-oriented objectives and bounded importance ratios. Geometric
interpolation is the appropriate path for the formulation studied in this work:
a student-anchored, reverse-KL-regularized proposal that tilts the current
student toward tokens with larger relative teacher support.
\newpage
\section{More Case Studies}
\newcolumntype{Z}{>{\raggedright\arraybackslash}X}

\definecolor{MBCaseGreen}{HTML}{DFF3E7}
\definecolor{MBCaseGreenText}{HTML}{14532D}
\definecolor{MBCaseRed}{HTML}{FEE2E2}
\definecolor{MBCaseRedText}{HTML}{7F1D1D}
\definecolor{MBCaseBlue}{HTML}{E8F0FE}
\definecolor{MBCaseBlueText}{HTML}{1D4ED8}
\definecolor{MBCaseAmber}{HTML}{FEF3C7}
\definecolor{MBCaseAmberText}{HTML}{92400E}
\definecolor{MBCaseGray}{HTML}{F6F7F9}

\newcommand{\mbcasebadge}[3]{%
  \begingroup\setlength{\fboxsep}{1.5pt}%
  \colorbox{#1}{\strut\scriptsize\textcolor{#2}{\textbf{#3}}}%
  \endgroup%
}
\newcommand{\mbokbadge}{\mbcasebadge{MBCaseGreen}{MBCaseGreenText}{Correct}}
\newcommand{\mbwrongbadge}{\mbcasebadge{MBCaseRed}{MBCaseRedText}{Wrong}}
\newcommand{\mbparsefailbadge}{\mbcasebadge{MBCaseRed}{MBCaseRedText}{Parse fail}}
\newcommand{\mbdriftbadge}{\mbcasebadge{MBCaseRed}{MBCaseRedText}{Answer drift}}
\newcommand{\mboverlongbadge}{\mbcasebadge{MBCaseAmber}{MBCaseAmberText}{Overlong}}
\newcommand{\mbcompactbadge}{\mbcasebadge{MBCaseGreen}{MBCaseGreenText}{Compact}}
\newcommand{\mblenbadge}[1]{\mbcasebadge{MBCaseBlue}{MBCaseBlueText}{#1}}
\newcommand{\mbrawline}[1]{{\ttfamily\scriptsize #1}}
\newcommand{\mbsnippetcell}[2]{%
  \begin{minipage}[t]{\linewidth}\raggedright
  \textbf{Read.} #1\par\vspace{0.12em}
  \textbf{Snippet.} \mbrawline{#2}
  \end{minipage}%
}
\newcommand{\mbcaserow}[5]{#1 & #2 & #3 & \mbsnippetcell{#4}{#5} \\}

\begin{figure*}[ht]
\centering
\footnotesize
\setlength{\fboxsep}{4pt}
\fcolorbox{black!15}{MBCaseGray}{%
\begin{minipage}{0.98\linewidth}
\textbf{Case A: first-order time constant.}
\textbf{Prompt.} For $\dot{y}+5y=10u$, what is the time constant?
\quad \textbf{Ground truth.} $\boxed{1/5}$.

\vspace{0.25em}
\textbf{Takeaway.} Ours identifies the system parameter and stops after a valid boxed answer. The four baselines often reach the same value but continue into long, unparseable tails.

\vspace{0.25em}
\setlength{\tabcolsep}{3pt}
\renewcommand{\arraystretch}{1.04}
\begin{tabularx}{\linewidth}{@{}l c c Z@{}}
\toprule
Method & Outcome & Length & Interpretation and raw-output snippet \\
\midrule
\mbcaserow{Ours}{\mbokbadge}{\mblenbadge{303 tokens}}
{Maps the ODE to $\dot y+a y=bu$, sets $a=5$, and emits the final answer.}
{\(a=5\), so \(\tau=1/a=1/5\) ... Final Answer: \(\boxed{1/5}\)}
\midrule
\mbcaserow{OPD}{\mbparsefailbadge}{\mblenbadge{16.4k tokens}}
{Contains the correct value, then repeats the final-answer pattern until truncation.}
{Final Answer: \(\boxed{0.2}\) ... Final Answer: \(\boxed{0.2}\) ... Final Answer: open boxed answer}
\mbcaserow{ExOPD}{\mbparsefailbadge}{\mblenbadge{16.4k tokens}}
{Starts from the right dynamics but drifts into a 63.2 percent transient discussion.}
{steady-state value is 2 ... \(t=-\frac{1}{5}\ln(0\ldots)\)}
\mbcaserow{EOPD}{\mbparsefailbadge}{\mblenbadge{16.4k tokens}}
{Continues after solving and loses the final answer in a self-contradictory transient calculation.}
{time constant is not the same ... solution is \(y(t)=20(1-e^{-5t})\) ...}
\mbcaserow{VETO}{\mbparsefailbadge}{\mblenbadge{16.4k tokens}}
{Also over-extends the transient interpretation and ends without a parseable answer.}
{\(2(1-e^{-5\tau})=0.736\) ... \(\tau=-\ln(0.632)/5\) ...}
\bottomrule
\end{tabularx}
\end{minipage}}
\caption{Multi-baseline case study on Minerva Math id 128. The qualitative advantage is not just shorter length: Ours preserves answer extractability after the correct reasoning step, while all four baselines shown lose the parseable final answer.}
\label{fig:case-time-constant-all-baselines}
\end{figure*}
\clearpage
\begin{figure*}[ht]
\centering
\footnotesize
\setlength{\fboxsep}{4pt}
\fcolorbox{black!15}{MBCaseGray}{%
\begin{minipage}{0.98\linewidth}
\textbf{Case B: tritium down quarks.}
\textbf{Prompt.} How many down quarks does a tritium ($H^3$) nucleus contain?
\quad \textbf{Ground truth.} $\boxed{5}$.

\vspace{0.25em}
\textbf{Takeaway.} The problem is short and easy to audit. Ours uses the nuclear composition directly; the four baselines drift to an internally inconsistent quark count.

\vspace{0.25em}
\setlength{\tabcolsep}{3pt}
\renewcommand{\arraystretch}{1.04}
\begin{tabularx}{\linewidth}{@{}l c c Z@{}}
\toprule
Method & Outcome & Length & Interpretation and raw-output snippet \\
\midrule
\mbcaserow{Ours}{\mbokbadge}{\mblenbadge{309 tokens}}
{Uses one proton and two neutrons, then counts only down quarks.}
{1 proton \(\to\) 1 down; 2 neutrons \(\to\) 2 + 2 down; total = 5 ... \(\boxed{5}\)}
\midrule
\mbcaserow{OPD}{\mbdriftbadge}{\mblenbadge{16.4k tokens}}
{Repeats an incorrect final count after visiting the right topic.}
{Final Answer: \(\boxed{3}\) quarks ... Final Answer: \(\boxed{3}\) quarks ...}
\mbcaserow{ExOPD}{\mbdriftbadge}{\mblenbadge{16.4k tokens}}
{Falls into the same wrong final count and continues repeating it.}
{Yes. Final answer is: \(\boxed{3}\) down quarks ... Final Answer: \(\boxed{3}\)}
\mbcaserow{EOPD}{\mbdriftbadge}{\mblenbadge{16.4k tokens}}
{Ends in repeated confirmation of the wrong boxed answer.}
{Final Answer: \(\boxed{3}\) ... (Yes.) Final ... (End.) ...}
\mbcaserow{VETO}{\mbdriftbadge}{\mblenbadge{1.9k tokens}}
{Correctly names tritium as one proton and two neutrons, then swaps quark content.}
{1 proton: 1 u and 2 d; 2 neutrons: each has 1 d and 2 u ... total = 3}
\bottomrule
\end{tabularx}
\end{minipage}}
\caption{Multi-baseline case study on Minerva Math id 141. Ours preserves the simple invariant: proton $uud$ contributes one down quark and each neutron $udd$ contributes two. The four baselines drift to $\boxed{3}$.}
\label{fig:case-tritium-all-baselines}
\end{figure*}
\clearpage
\begin{figure*}[ht]
\centering
\footnotesize
\setlength{\fboxsep}{4pt}
\fcolorbox{black!15}{MBCaseGray}{%
\begin{minipage}{0.98\linewidth}
\textbf{Case C: logarithmic equation.}
\textbf{Prompt.} Determine all real $x$ for which $2\log_2(x-1)=1-\log_2(x+2)$.
\quad \textbf{Ground truth.} $\boxed{\sqrt{3}}$.

\vspace{0.25em}
\textbf{Takeaway.} Ours reduces the equation to a cubic with a domain check. The four baselines get trapped in numerical checking or contradiction loops and do not provide a parsed final answer.

\vspace{0.25em}
\setlength{\tabcolsep}{3pt}
\renewcommand{\arraystretch}{1.04}
\begin{tabularx}{\linewidth}{@{}l c c Z@{}}
\toprule
Method & Outcome & Length & Interpretation and raw-output snippet \\
\midrule
\mbcaserow{Ours}{\mbokbadge}{\mblenbadge{785 tokens}}
{Combines logs, solves $(x-1)^2(x+2)=2$, and applies $x>1$.}
{\((x-1)^2(x+2)=2\) \(\Rightarrow x^3-3x=0\); domain \(x>1\) \(\Rightarrow x=\sqrt{3}\); \(\boxed{\sqrt{3}}\)}
\midrule
\mbcaserow{OPD}{\mbparsefailbadge}{\mblenbadge{16.4k tokens}}
{Derives the right transformed equation, then rejects its own solution.}
{\((x-1)^2(x+2)=2\) ... only at \(x=\sqrt{3}\), which is not a solution to ...}
\mbcaserow{ExOPD}{\mbparsefailbadge}{\mblenbadge{16.4k tokens}}
{Switches from symbolic solving to unstable decimal checking.}
{\(\log_2(0.732)\approx -0.448\) ... \(f(1.732)\approx 1.035\) ...}
\mbcaserow{EOPD}{\mbparsefailbadge}{\mblenbadge{16.4k tokens}}
{Keeps recomputing approximations and never returns a parseable final set.}
{\(\ln(0.732)/\ln(2)\approx -0.455\) ... \(\log_2(1.09)\approx\ldots\)}
\mbcaserow{VETO}{\mbparsefailbadge}{\mblenbadge{16.4k tokens}}
{Continues comparing sides numerically instead of closing the algebraic solution.}
{At \(x=2\): \(0>-1\) ... LHS \(>\) RHS ... only if LHS = RHS ...}
\bottomrule
\end{tabularx}
\end{minipage}}
\caption{Multi-baseline case study on OlympiadBench id 2548. Ours follows a short symbolic path and keeps the domain constraint; every one of the four baseline rows shown fails to produce a parsed answer after much longer output.}
\label{fig:case-log-equation-all-baselines}
\end{figure*}
\clearpage
\begin{figure*}[ht]
\centering
\footnotesize
\setlength{\fboxsep}{4pt}
\fcolorbox{black!15}{MBCaseGray}{%
\begin{minipage}{0.98\linewidth}
\textbf{Case D: easy logarithm.}
\textbf{Prompt.} Let $T=16$. Compute $x$ such that $\log_4 T=\log_2 x$.
\quad \textbf{Ground truth.} $\boxed{4}$.

\vspace{0.25em}
\textbf{Takeaway.} This case isolates the length advantage: several OPD-family baselines eventually get the same answer but hit the 16.4k-token budget doing so, while Ours solves and stops.

\vspace{0.25em}
\setlength{\tabcolsep}{3pt}
\renewcommand{\arraystretch}{1.04}
\begin{tabularx}{\linewidth}{@{}l c c Z@{}}
\toprule
Method & Outcome & Length & Interpretation and raw-output snippet \\
\midrule
\mbcaserow{Ours}{\mbokbadge}{\mblenbadge{239 tokens}}
{Substitutes $T=16$, computes $\log_4 16=2$, and solves $x=2^2$.}
{\(\log_4 16=2\); \(\log_2 x=2\); \(x=2^2=4\); Final Answer: \(\boxed{4}\)}
\midrule
\mbcaserow{OPD}{\mboverlongbadge}{\mblenbadge{16.4k tokens}}
{Correct answer, but output continues into repeated confirmation tokens.}
{... \(\boxed{4}\) ... [repeated confirmation tokens continue]}
\mbcaserow{ExOPD}{\mboverlongbadge}{\mblenbadge{16.4k tokens}}
{Correct answer, but continues long after the final answer.}
{... Final Answer: \(\boxed{4}\) ... [repeated confirmation tokens continue]}
\mbcaserow{EOPD}{\mboverlongbadge}{\mblenbadge{16.4k tokens}}
{Same overgeneration pattern despite solving the elementary equation.}
{... Final Answer: \(\boxed{4}\) ... [repeated confirmation tokens continue]}
\mbcaserow{VETO}{\mbokbadge}{\mblenbadge{482 tokens}}
{Correct and comparatively short, but still longer and less direct than Ours.}
{\(\log_2 x=2 \Rightarrow x=2^2=4\) ... \(\boxed{4}\) ... check matches}
\bottomrule
\end{tabularx}
\end{minipage}}
\caption{Multi-baseline case study on OlympiadBench id 2783. Ours gives the shortest correct response among the compared methods; OPD, ExOPD, and EOPD solve the problem but exhaust a 16.4k-token generation budget after the answer.}
\label{fig:case-easy-log-all-baselines}
\end{figure*}

\end{document}